\documentclass{article}

\usepackage{microtype}
\usepackage{graphicx}
\usepackage{subfigure}
\usepackage{booktabs} % for professional tables
\usepackage{booktabs}
\usepackage{multirow}
\usepackage{hyperref}
\usepackage{subcaption}
\usepackage{stfloats}

\usepackage[accepted]{icml2026}

\usepackage{amsmath}
\usepackage{amssymb}
\usepackage{mathtools}
\usepackage{amsthm}

\usepackage[capitalize,noabbrev]{cleveref}

\theoremstyle{plain}
\newtheorem{theorem}{Theorem}[section]

\newtheorem{corollary}[theorem]{Corollary}
\theoremstyle{definition}

\theoremstyle{remark}

\newtheorem*{theorem*}{Theorem}
\newtheorem*{corollary*}{Corollary}
\icmltitlerunning{Efficient Neural Controlled Differential Equations via Attentive Kernel
Smoothing}

\begin{document}

\twocolumn[
\icmltitle{Efficient Neural Controlled Differential Equations via Attentive Kernel Smoothing}

% It is OKAY to include author information, even for blind
% submissions: the style file will automatically remove it for you
% unless you've provided the [accepted] option to the icml2025 package.
\icmlsetsymbol{equal}{*}

\begin{icmlauthorlist}
\icmlauthor{Egor Serov}{yyy}
\icmlauthor{Ilya Kuleshov}{yyy}
\icmlauthor{Alexey Zaytsev}{yyy,xxx}

%\icmlauthor{}{sch}
%\icmlauthor{}{sch}
\end{icmlauthorlist}

\icmlaffiliation{yyy}{Applied AI Institute}
\icmlaffiliation{xxx}{Risk AI Research Lab}

\icmlcorrespondingauthor{Egor Serov}{e.serov@applied-ai.ru}

% You may provide any keywords that you
% find helpful for describing your paper; these are used to populate
% the "keywords" metadata in the PDF but will not be shown in the document
\icmlkeywords{Machine Learning, ICML}

\vskip 0.3in
]

\printAffiliationsAndNotice{}  % otherwise use the standard text.

\begin{abstract}
Neural Controlled Differential Equations (Neural CDEs) provide a powerful continuous-time framework for sequence modeling, yet the roughness of the driving control path often restricts their efficiency. Standard splines introduce high-frequency variations that force adaptive solvers to take excessively small steps, driving up the Number of Function Evaluations (NFE). We propose a novel approach to Neural CDE path construction that replaces exact interpolation with Kernel and Gaussian Process (GP) smoothing, enabling explicit control over trajectory regularity. To recover details lost during smoothing, we propose an attention-based Multi-View CDE (MV-CDE) and its convolutional extension (MVC-CDE), which employ learnable queries to inform path reconstruction. This framework allows the model to distribute representational capacity across multiple trajectories, each capturing distinct temporal patterns. Empirical results demonstrate that our method, MVC-CDE with GP, achieves state-of-the-art accuracy while significantly reducing NFEs and total inference time compared to spline-based baselines.
\end{abstract}

\section{Introduction}
\label{sec:intro}
Sequential data processing is a fundamental challenge across a wide range of disciplines, including the analysis of medical records~\cite{medical_ref}, financial dynamics~\cite{finance_ref}, and physical systems~\cite{science_ref}. Most neural architectures for sequential data implicitly assume that observations lie on a regular temporal grid, enabling the direct application of discrete-time models such as recurrent or convolutional networks. However, this assumption is frequently violated in real-world settings. Sensor failures, asynchronous measurements, and missing data often result in irregularly sampled time series.
 
A more rigorous approach is to treat the latent state evolution as a continuous-time process. This paradigm shift gained traction with Neural ODEs~\cite{chen2018neural}, which model the dynamics of the hidden state via ordinary differential equations. However, standard Neural ODEs are determined by their initial state and do not naturally adjust to incoming observations over time. 
To address this, Neural Controlled Differential Equations (Neural CDEs)~\cite{kidger2020neural} were introduced as a principled framework for processing irregularly observed sequences. Their central idea is to represent the input data as a continuous-time control signal \( \mathbf{X}(t) \), obtained via interpolation, and to evolve a hidden state \( \mathbf{z(t)} \) according to the controlled differential equation
\begin{equation}
\frac{\mathrm{d}\mathbf{z}(t)   }{\mathrm{d}t} = f_\theta(\mathbf{z}) \frac{\mathrm{d}\mathbf{X}(t)}{\mathrm{d}t}.
\label{eq:ncde_eqq}
\end{equation}
where the vector field \( f_\theta \) is parameterized by a neural network. This formulation allows Neural CDEs to naturally handle missing values and irregular sampling while maintaining a continuous-time latent representation.

The computational cost of Neural CDEs is dominated by the numerical integration of this equation. In particular, it is closely tied to the number of times the solver evaluates the vector field function \( f_\theta \), commonly referred to as the NFE. Modern implementations typically rely on adaptive-step solvers, which aim to minimize NFE while satisfying a prescribed error tolerance by taking larger steps in regions where the dynamics are smooth.

However, there is a direct trade-off: NFE is highly sensitive to the regularity of the input trajectory. In practice, real-world time series are often noisy, and accurate interpolations of such data can produce highly irregular or jagged paths. This forces adaptive solvers to take many small integration steps, dramatically increasing NFE and computational cost. As a consequence, Neural CDEs can be an order of magnitude slower than more traditional time-series modeling approaches, limiting their practical applicability.

In this work, we propose replacing precise interpolation schemes with smoothing-based alternatives, thereby substantially reducing trajectory irregularity and accelerating numerical integration by up to an order of magnitude, without sacrificing predictive performance. To mitigate the potential loss of high-frequency information caused by over-smoothing, we introduce an attention-based architecture inspired by Q-Former models~\cite{li2023blip}. This mechanism adaptively aggregates information from multiple smoothed versions of the input, each capturing different temporal scales or regions of the sequence.

Conceptually, our approach can be viewed as a novel form of temporal parallelization for Neural CDEs: rather than integrating a single high-fidelity trajectory, we distribute interpolation precision across several smoother Neural CDE paths and learn to recombine them. This yields significant computational savings while preserving the expressive power of the continuous-time model.

Our main contributions are as follows:
\begin{itemize}
    \item We identify stiffness as a primary driver of the high NFE in Neural CDEs, and empirically demonstrate that accurate interpolations of noisy real-world data can severely degrade solver efficiency.
    
    \item We propose replacing precise interpolation schemes with smoothing-based Kernel and GP alternatives for Neural CDEs. Such smoothing significantly reduces trajectory roughness and accelerates numerical integration by up to an order of magnitude.
    %without loss in predictive performance.
    
    \item To counteract information loss induced by over-smoothing, we introduce an attention-based architecture inspired by Q-Former models named Multi-View Controlled Differential Equations (MV-CDE) and its convolutional extension, MVC-CDE. They adaptively aggregate information from multiple smoothed versions of the input, each capturing different temporal regions or scales.
    
    %\item We reinterpret our method as a form of temporal parallelization for Neural CDEs, distributing interpolation precision across several smoother continuous-time trajectories and learning to recombine them into a single expressive representation.
    
    \item We validate our approach on real-world classification benchmarks. Our method achieves state-of-the-art accuracy—performing competitively alongside recent linear-time state-space models such as Mamba—while demonstrating speedups of 5\(\times\) to 14\(\times\) over standard Neural CDEs, along with superior robustness to additive noise.
    
    %\item We demonstrate the broad applicability of our architecture beyond classification by validating it on generative tasks, showing highly accurate time-series forecasting under extreme missingness scenarios. 
\end{itemize}

\paragraph{Conflict of Interest Disclosure.}
The authors declare that there are no financial conflicts of interest related to this work.

\section{Related Work}
\label{sec:related}
\textbf{Neural Controlled Differential Equations.} 
Neural ODEs~\citep{chen2018neural} introduced continuous-depth modeling, yet their dependence on fixed initial conditions limits them to tabular data or requires hybrid discrete-continuous updates like ODE-RNNs~\citep{rubanova2019latent}. To address this,~\citet{kidger2020neural} formulated Neural CDEs, in which the dynamics are driven by a continuous control path interpolated from observations. While this architecture effectively generalizes Recurrent Neural Networks~\citep{oh2025comprehensive} to irregular time series, a fundamental limitation remains: the computational cost is intrinsically tied to the regularity of the trajectory. Regardless of the learned vector field, the solver must traverse the geometric complexity of the driving path.

\textbf{Interpolation and Path Regularity.}
Significant efforts have focused on regularizing the latent dynamics to reduce NFEs, employing penalties on higher-order derivatives~\citep{kelly2020learning}, optimal transport~\citep{finlay2020train}, or stochastic end-time regularization~\citep{ghosh2020steersimpletemporalregularization}. However, since Neural CDE dynamics are multiplicative, the solver step size is still determined by the roughness of the driving control path. Standard exact interpolation schemes force the path through every noisy observation, resulting in high-frequency variations and large derivatives \citep{deboor1978, reinsch1967}. As the local truncation error scales with these derivatives \citep{hairer1993solving}, adaptive solvers must drastically reduce step sizes to maintain accuracy \citep{dormand1980family, morrill2022on}. Unlike prior work confined to deterministic interpolants, we propose explicit path smoothing via non-parametric kernels \citep{wand1994kernel} or Heteroscedastic Gaussian Processes \citep{rasmussen2006} to decouple integration cost from input noise.

\textbf{Stochastic and Probabilistic Approaches.}
Incorporating uncertainty directly into differential equations, Neural SDEs \citep{li2020scalablegradientsstochasticdifferential, kidger2021efficientaccurategradientsneural} extend the CDE framework with diffusion terms. While powerful, this necessitates complex stochastic integrators like Euler-Maruyama \citep{kloeden1989survey}, significantly increasing computational cost. Probabilistic frameworks, such as Neural Processes \citep{garnelo2018neuralprocesses} and Spatio-Temporal Point Processes \citep{chen2021neuralspatiotemporalpointprocesses}, model data as unordered sets or discrete event intensities. Consequently, they either lack the explicit causal structure of recurrent systems or do not track the continuous evolution of signal values. Other lines of research address irregular sampling by employing Gaussian Processes as feature extractors or adapters before passing data to RNNs \citep{li2016scalable, futoma2017learning}, or by utilizing continuous-time interpolation and attention networks \citep{shukla2018interpolationprediction, shukla2020multitime}. While effective, these models operate primarily in discrete time after the initial adaptation phase. In contrast to introducing expensive stochastic latent dynamics or reverting to discrete RNNs, we use Gaussian Processes solely as a robust smoothing mechanism—relying on the optimal parameter estimation properties of GPs \citep{zaytsev2014properties} to construct regularized continuous control paths for Neural CDEs.

\textbf{Efficient Integration and Structured Approaches.}
Algebraic approaches aim to optimize integration by summarizing input data over intervals. Neural Rough Differential Equations \citep{morrill2021neuralrough} and Log-Neural CDEs \citep{walker2024log} utilize signatures \citep{lyons1998differential, Chevyrev2026, lyons2025signaturemethodsmachinelearning} to effectively decouple the step size from the raw sampling rate. Other works focus on solver mechanics, such as relaxing error tolerances via seminorms \citep{kidger2021heythatsodefaster}, or architectural efficiency: Structured Linear CDEs \citep{walker2025structured} employ block-diagonal state transitions to accelerate vector field evaluations. We note that our Multi-View architecture shares this block-diagonal structure, but uses it to integrate multiple trajectory views simultaneously rather than for parallel associative scans. Finally, Neural Flows \citep{bilovs2021neural} bypass iterative integration entirely by modeling the solution map directly. However, this eliminates the explicit causal structure necessary for adapting to evolving input streams. Unlike these methods, which focus on algebraic compression or bypassing the solver, our work targets the geometric irregularity of the input path itself as the primary source of numerical stiffness. A parallel line of research, Mechanistic Neural Networks~\citep{pervez2024mechanistic}, targets integration efficiency by reformulating linear ODE systems as constrained optimization problems and solving them using parallelized differentiable linear-programming layers. While highly efficient for linear scientific modeling, such approaches cannot be easily extended to general non-linear Neural CDE vector fields. In contrast, our approach is complementary: rather than modifying the solver's internal algebraic structure, we optimize the geometric smoothness of the driving input path itself, thereby directly mitigating stiffness for general-purpose adaptive non-linear solvers.

\textbf{Latent Path Learning and Attention Mechanisms.}
While discrete transformers \citep{zhou2021informerefficienttransformerlong, wu2022autoformerdecompositiontransformersautocorrelation} address long-term dependencies on fixed grids, continuous attention mechanisms—such as ContiFormer \citep{chen2023contiformer}, Attentive NCDEs \citep{jhin2024attentive}, and MA-NODE \citep{havaei2025continuous} adapt this to irregular data. However, these methods primarily use attention to capture historical correlations or fuse multi-scale dynamics, often increasing computational cost. Similarly, generative frameworks like EXIT-NCDE \citep{jhin2022exit}, Neural Lad \citep{li2023neural}, and stability-focused like DeNOTS \citep{kuleshov2024denots} prioritize expressivity or extrapolation, frequently at the expense of higher NFEs. In contrast, we employ attention not for temporal dependency modeling, but as a reconstruction mechanism. Inspired by Q-Former \citep{li2023blip}, our Multi-View architecture uses learnable queries to recover high-frequency information from explicitly smoothed control paths. While Log-Neural CDEs \citep{walker2024log} address efficiency via algebraic summarization, our work establishes a parallel geometric alternative, demonstrating that path smoothing combined with attentive recovery offers a robust solution to the solver bottleneck.

\textbf{Summary and Baselines.}
To rigorously evaluate our contribution, we select next baselines: GRU-D~\citep{che2018recurrent}, ODE-RNN~\citep{rubanova2019latent} , Linear and Cubic Neural CDEs~\citep{kidger2020neural}. Additionally, we include Log-NCDE \citep{walker2024log} and the recent linear-time state-space model Mamba \citep{gu2024mamba} as primary benchmarks for computational efficiency.
The critical research gap we address is the high sensitivity of adaptive solvers to input noise, which remains a bottleneck for standard Neural CDEs and is only indirectly addressed by algebraic signature-based methods. Our novelty lies in a decoupling of the integration cost from signal noise via smoothing, while recovering lost details through a multi-view attentive mechanism.

\section{Method}
\label{sec:method}

\label{fig:overall_workflows}

\subsection{Problem Formulation}
We consider the task of supervised learning on time series. Let $\mathcal{S} = \{(t_k, \mathbf{x}_k)\}_{k=1}^N$ be an input sequence, where $t_k \in \mathbb{R}$ represent strictly increasing time stamps and $\mathbf{x}_k \in \mathbb{R}^d$ are observed feature vectors. The sampling rate is characterized by the time intervals $\Delta t_k = t_{k+1} - t_k$. It reduces to a constant $\Delta t$ for regular series. The objective is to learn a mapping from $\mathcal{S}$ to a target $y$. For simplicity, we assume that $t_1=0$ and $t_N = T$. The sequence is processed to obtain a latent embedding vector, which is subsequently passed to a linear classification or regression model to produce a prediction $\hat{y}$. 

\subsection{Neural Controlled Differential Equations}
Neural CDE~\cite{kidger2020neural} models the latent state $\mathbf{z}(t)$ as the solution to a differential equation driven by a continuous control path $\mathbf{X}(t): [0, T] \to \mathbb{R}^d$, which is constructed by interpolating the discrete observations $\mathcal{S}$. The dynamics are governed by:
\begin{equation}
    \mathbf{z}(T) = \mathbf{z}(0) + \int_0^T f_\theta(\mathbf{z}(t)) \frac{\mathrm{d}\mathbf{X}(t)}{\mathrm{d}t} \mathrm{d}t,
    \label{eq:ncde_integral}
\end{equation}
where $f_\theta$ is a neural network parameterizing the vector field. The initial state $\mathbf{z}(0)$ is typically a learned linear projection of $X(0)$. In practice, this integral is computed by an adaptive ODE solver (such as Dormand-Prince~\cite{dormand1980family, hairer1993solving}) applied to the equivalent ODE system~\eqref{eq:ncde_eqq}. These solvers dynamically adapt the step size $\eta_i = t_{i+1} - t_i$ to keep the Local Truncation Error (LTE) below a tolerance $\delta$. Throughout our analysis, we formally define the Number of Function Evaluations (NFE) as the total number of calls to the underlying vector field $f_\theta$ during this adaptive integration process,  including both accepted and rejected steps. To analyze the computational complexity, we first establish the relationship between control-path regularity and the solver step size.

\begin{theorem}
[NFE dependence on Control Path]
\label{thm:step_size}
For an adaptive ODE solver (e.g., Dormand-Prince) of order $p$ with error tolerance $\delta$, the total NFE is bounded by the integral of the inverse step size up to a positive constant:
\[
    \text{NFE} \lesssim \int_0^T \left( \frac{1}{\delta} \left\| \frac{\mathrm{d}^{p+1}\mathbf{X}(t)}{\mathrm{d}t^{p+1}} \right\| \right)^{\frac{1}{p+1}} \mathrm{d}t.
\]
\end{theorem}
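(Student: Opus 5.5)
The plan is to combine the standard heuristic model of step-size control with a local error estimate for the CDE right-hand side. The bound is proved in the asymptotic regime $\delta \to 0$, where adaptive solvers behave like a smooth step-size function. Write the ODE as $\dot{\mathbf{z}} = F(t,\mathbf{z}) := f_\theta(\mathbf{z})\,\dot{\mathbf{X}}(t)$.

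For a method of order $p$, the local truncation error over a step of length $\eta$ starting at $t$ has the form
\[
\mathrm{LTE}(t,\eta) = C\,\eta^{p+1}\,\Phi(t) + O(\eta^{p+2}).
\]
Here $\Phi$ is an elementary-differential combination of total derivatives $\frac{\mathrm{d}^{k}}{\mathrm{d}t^{k}}F$ up to order $p$, evaluated along the solution.

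\textbf{First step: bound $\Phi$ by the control path.} Expand those total derivatives by Leibniz/Faà di Bruno. Each contains terms $D^j f_\theta(\mathbf{z})[\dot{\mathbf{z}},\dots]\,\mathbf{X}^{(m)}$, where $\dot{\mathbf{z}}$ itself carries a factor $\dot{\mathbf{X}}$. Assume $f_\theta$ and its derivatives are bounded on the compact set visited by the trajectory, which holds for neural networks with smooth activations. Assume also that the lower derivatives of $\mathbf{X}$ are bounded. Then, up to constants and lower-order terms, the highest-order contribution is $f_\theta(\mathbf{z})\,\mathbf{X}^{(p+1)}(t)$. This gives $|\Phi(t)| \lesssim \|\mathbf{X}^{(p+1)}(t)\|$, and it is the precise sense in which $\lesssim$ is meant. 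I expect this to be the main obstacle: rigorously the lower-order terms cannot be dropped. I would handle it either by stating the result with the constant absorbing $\sup$ norms of $\mathbf{X}^{(k)}$ for $k\le p$, or by adding a floor term so that $\|\mathbf{X}^{(p+1)}\|$ is replaced by $\|\mathbf{X}^{(p+1)}\| + c$. The latter keeps the integrand away from zero.

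\textbf{Second step: the step-size rule.} The controller accepts a step only if $\mathrm{LTE}\le\delta$. Its step proposals use the standard update $\eta_{\text{new}} = \eta\,(\delta/\mathrm{err})^{1/(p+1)}$ times a safety factor. Hence accepted steps satisfy
\[
\eta(t) \gtrsim \bigl(\delta/\|\mathbf{X}^{(p+1)}(t)\|\bigr)^{1/(p+1)},
\]
up to the safety factor and the slow variation of $\Phi$ over one step. This uses continuity of $\mathbf{X}^{(p+1)}$, so that $\Phi$ is nearly constant on a step. Rejected steps can be charged to accepted ones. After a rejection the step shrinks by a bounded factor, and the near-optimal step is then accepted, so the number of rejections is at most a constant multiple of the number of accepted steps.

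\textbf{Third step: counting.} Each step costs a fixed number $s$ of evaluations of $f_\theta$ ($s=6$ or $7$ stages for Dormand--Prince with FSAL). Therefore $\mathrm{NFE} \lesssim s\cdot\#\text{steps}$. With a mesh $0=t_0<\dots<t_M=T$ and a step density $\eta(t)$ varying slowly, $\#\text{steps} = \sum_i 1 = \sum_i \eta_i/\eta(t_i) \approx \int_0^T \eta(t)^{-1}\,\mathrm{d}t$. Substituting the lower bound on $\eta(t)$ yields
\[
\mathrm{NFE} \lesssim \int_0^T \left(\tfrac{1}{\delta}\|\mathbf{X}^{(p+1)}(t)\|\right)^{1/(p+1)}\mathrm{d}t.
\]
The Riemann-sum approximation is justified by the slow-variation assumption, with an $O(1)$ additive error that is absorbed into $\lesssim$.
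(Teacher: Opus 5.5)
Your proposal follows the same route as the paper's argument in Appendix~\ref{app:solver_dynamics}: a principal local error proportional to $\eta^{p+1}$, a chain-rule reduction to $\mathbf{X}^{(p+1)}$, inverting $\mathrm{LTE}\le\delta$ for the step size, and $\#\text{steps}\approx\int_0^T\eta(t)^{-1}\,\mathrm{d}t$. You are more careful than the paper at the two places where its version is defective.

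First, the paper only derives the upper bound $\eta_i \le (\delta/(C\|\mathbf{X}^{(p+1)}(t_i)\|))^{1/(p+1)}$. By itself that bounds NFE from \emph{below}. Your use of the controller update $\eta_{\text{new}}=\eta\,(\delta/\mathrm{err})^{1/(p+1)}$ to get a lower bound on accepted steps, together with charging rejected steps to accepted ones, is what the direction $\text{NFE}\lesssim$ actually requires.

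Second, the paper simply asserts $\|\mathbf{z}^{(p+1)}\|\propto\|\mathbf{X}^{(p+1)}\|$ from Lipschitz continuity and boundedness of $f_\theta$. In doing so it silently drops the lower-order terms you flag.

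That obstacle is real, and only one of your two repairs works.

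\textbf{Absorbing lower derivatives into the constant fails.} Putting $\sup_{k\le p}\|\mathbf{X}^{(k)}\|$ into a multiplicative constant cannot give a pointwise bound $|\Phi(t)|\lesssim\|\mathbf{X}^{(p+1)}(t)\|$. Already at second order,
$\ddot{\mathbf{z}} = Df_\theta(\mathbf{z})[f_\theta(\mathbf{z})\dot{\mathbf{X}}]\,\dot{\mathbf{X}} + f_\theta(\mathbf{z})\ddot{\mathbf{X}}$,
and the first term survives where $\ddot{\mathbf{X}}=0$. Take affine $\mathbf{X}(t)=t\mathbf{v}$ and a generic nonlinear $f_\theta$: the right-hand side of the theorem vanishes identically, yet the solver still needs a number of steps growing like $\delta^{-1/(p+1)}$. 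So the statement as written cannot be proved by any argument, the paper's included.

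\textbf{The floor version is the correct form, but it changes the statement.} Replacing $\|\mathbf{X}^{(p+1)}\|$ by $\|\mathbf{X}^{(p+1)}\|+c$ costs an additive $T(c/\delta)^{1/(p+1)}$. That term has the same order in $\delta$ as the main term, so it is not an $O(1)$ error hidden in $\lesssim$.

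\textbf{A sharper repair than a constant floor.} Every elementary differential of order $p+1$ of $F=f_\theta(\mathbf{z})\dot{\mathbf{X}}$ is a sum of products $\prod_i\mathbf{X}^{(k_i)}$ with $\sum_i k_i=p+1$, because each time differentiation raises this total weight by exactly one. The natural integrand is therefore $\bigl(\delta^{-1}\sum_{k_1+\dots+k_r=p+1}\prod_i\|\mathbf{X}^{(k_i)}(t)\|\bigr)^{1/(p+1)}$. For the smoothed paths every such product is $O(h^{-(p+1)})$, so Corollary~\ref{nfe} still follows.

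Finally, your continuity and slow-variation hypotheses on $\mathbf{X}^{(p+1)}$ are genuine assumptions. They hold for the kernel and GP paths, but not for the linear and cubic splines to which the paper also applies the theorem.
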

This theorem establishes that NFE is strictly determined by the $L_{1/(p + 1)}$-quasi-norm of the $(p + 1)$-th derivative of the control path.

\subsection{Limitations of Deterministic Splines}
\begin{figure*}[b!]
    \centering
    \includegraphics[width=0.8\linewidth]{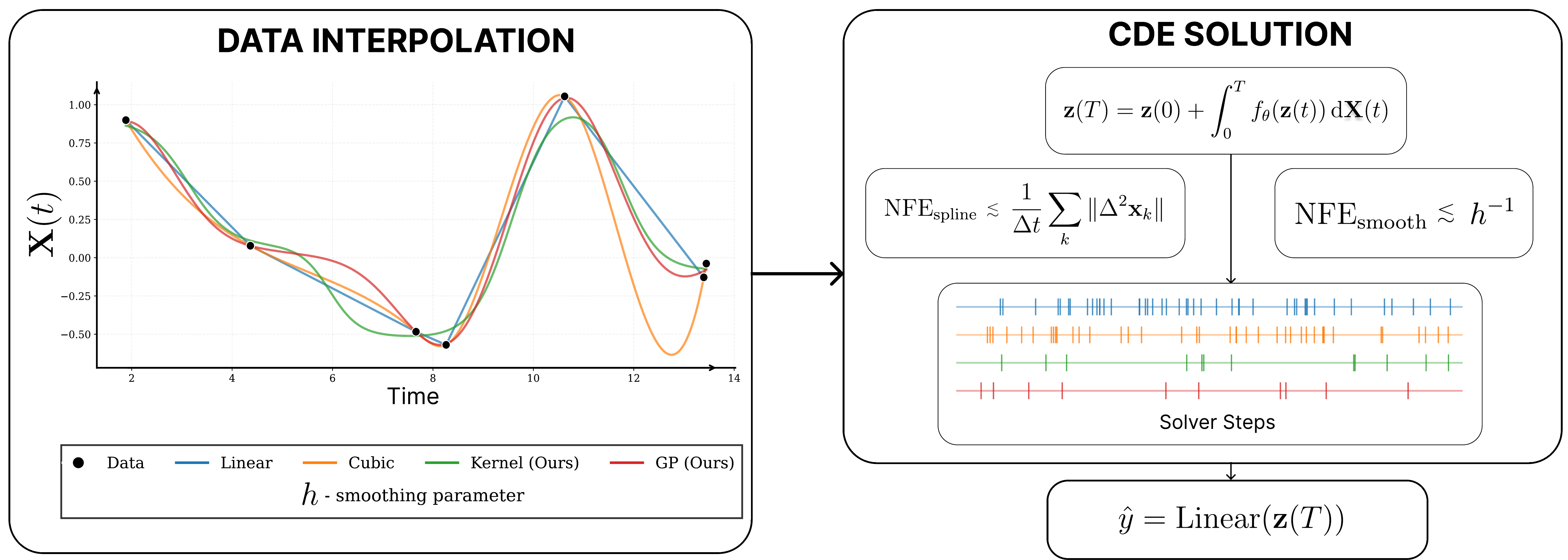}
    \caption{Standard Neural CDE pipeline for different types of path construction.}
    \label{fig:workflow1}
\end{figure*}

Standard implementations construct $\mathbf{X}(t)$ using natural cubic splines or linear interpolation. While these methods guarantee $\mathbf{X}(t_k) = \mathbf{x}_k$, exact interpolation rigidly couples path regularity with input variations. Applying Theorem~\ref{thm:step_size}, we derive explicit complexity estimates over the fixed size interval $[0, T]$.

For linear and cubic splines, the integration cost scales with the first and second finite differences, respectively:
\[
    \text{NFE}_{\text{lin}} \lesssim \sum_{k} \| \Delta \mathbf{x}_k \|, \quad \text{NFE}_{\text{cub}} \lesssim \frac{1}{\Delta t} \sum_{k} \| \Delta^2 \mathbf{x}_k \|,
\]
where $\Delta \mathbf{x}_k$ denotes the discrete derivative. In both cases, path roughness is determined by the input sequence. The solver is forced to traverse the exact geometry of the raw noisy observations, leading to a high NFE. See Figure~\ref{fig:workflow1} for a visualization of solver step distribution in standard splines.

\subsection{Single Path Smoothing}
\label{sec:single_scale}

To address the stiffness inherent to spline interpolation, we propose methods that decouple the trajectory smoothness from the input data. We first introduce Kernel Smoothing as a baseline to motivate the decoupling, then present Gaussian Process Smoothing, which constitutes our contribution.

\textbf{Kernel Smoothing.} We employ the Nadaraya-Watson estimator with a Gaussian Radial Basis Function (RBF) kernel  $k_h(t, t') = \exp\left(-\frac{(t - t')^2}{2h^2}\right)$ to construct a smooth path $X_{kernel}(t)$. Given a bandwidth parameter $h$, the path is defined as:
\[
    \mathbf{X}_{kernel}(t) = \frac{\sum_{k=1}^N k_h(t - t_k) \mathbf{x}_k}{\sum_{j=1}^N k_h(t - t_j)}.
\]

\textbf{Gaussian Process (GP) Smoothing.} Alternatively, we place a GP prior on the underlying signal, $\mathbf{X}_{GP}(t) \sim \mathcal{GP}(0, k(t, t'))$. We utilize the same kernel defined above. The control path $\mathbf{X}_{GP}(t)$ is the posterior mean function conditioned on the observations $\mathcal{S}$ with assumed Gaussian noise variance $\sigma^2$. It is computed as:
\[
    \mathbf{X}_{GP}(t) = \mathbf{k}_h(t)^\top (\mathbf{K}_h + \sigma^2 \mathbf{I})^{-1} \mathbf{X},
\]
where $\mathbf{X} \in \mathbb{R}^{N \times d}$ is the matrix of stacked feature vectors. The kernel vector $\mathbf{k}_h(t)$ and Gram matrix $\mathbf{K}_h$ are defined as in standard GP regression. 

While exact GP inference has cubic complexity $\mathcal{O}(N^3)$ with respect to sequence length due to matrix inversion, this is a static, one-time preprocessing cost. This initialization overhead remains negligible compared to the integration efficiency gains. As proven in Appendix~\ref{app:gp_optimality}, GPs provide the optimal linear reconstruction, justifying the cubic preprocessing cost by ensuring that the smoothed trajectory remains maximally faithful to the underlying signal compared to heuristic kernel estimates.

\begin{theorem}[Derivative Bounds for Smoothing Kernels]
\label{thm:smoothing_bounds}
Let $\mathbf{X}_h(t)$ be constructed via Nadaraya-Watson regression or as the posterior mean of a GP with a stationary RBF kernel with lengthscale $h$. The $p$-th derivative of the trajectory satisfies the uniform bound:
\[
    \sup_{t \in [0, T]} \left\| \frac{\mathrm{d}^p \mathbf{X}_h(t)}{\mathrm{d}t^p} \right\| \le \frac{C_p}{h^p} \|\mathbf{X}\|_\infty,
\]
where $C_p$ is a constant depending on the kernel shape and $\|\mathbf{X}\|_\infty$ is the bound on observations.
\end{theorem}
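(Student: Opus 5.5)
The plan is a scaling argument: write the Gaussian RBF kernel as $k_h(s)=\phi(s/h)$ with $\phi(u)=e^{-u^2/2}$. Then $\frac{\mathrm{d}^p}{\mathrm{d}t^p}k_h(t-t_k)=h^{-p}\phi^{(p)}((t-t_k)/h)$, and $\phi^{(p)}(u)=(-1)^pH_p(u)\phi(u)$ with $H_p$ the Hermite polynomial. So every derivative of a kernel section costs a factor $h^{-p}$ times a function whose size depends only on $p$. We treat the two estimators separately, because they are linear in the data $\mathbf{X}$ in different ways.

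\textbf{GP posterior mean.} Write $\mathbf{X}_{GP}(t)=\sum_k \alpha_k k_h(t-t_k)$ with weights $\boldsymbol\alpha=(\mathbf{K}_h+\sigma^2\mathbf{I})^{-1}\mathbf{X}$. Differentiating gives
\[
\frac{\mathrm{d}^p \mathbf{X}_{GP}}{\mathrm{d}t^p}=h^{-p}\sum_k \alpha_k\,\phi^{(p)}\!\big((t-t_k)/h\big).
\]
We bound this by $h^{-p}\,\|\phi^{(p)}\|_\infty\,\|\boldsymbol\alpha\|_1$. Since $\mathbf{K}_h$ is positive semidefinite, $\|(\mathbf{K}_h+\sigma^2\mathbf{I})^{-1}\|_2\le\sigma^{-2}$, which gives $\|\boldsymbol\alpha\|_1\le\sqrt N\,\sigma^{-2}\|\mathbf{X}\|_2\le N\sigma^{-2}\|\mathbf{X}\|_\infty$. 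This bound carries a factor $N$. To avoid it, we bound the sum instead by a separation argument: $\sum_k|\phi^{(p)}((t-t_k)/h)|\lesssim 1+h/\min_k\Delta t_k$. Either way $C_p$ absorbs $\sigma^{-2}$ and a sampling-density factor. We will state this dependence explicitly, since the theorem only says $C_p$ depends on ``kernel shape''.

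\textbf{Nadaraya--Watson.} Here $\mathbf{X}_{kernel}=N(t)/D(t)$, where $N=\sum_k k_h(t-t_k)\mathbf{x}_k$ and $D=\sum_j k_h(t-t_j)$. The clean route is to write
\[
\mathbf{X}_{kernel}(t)=\sum_k w_k(t)\mathbf{x}_k,\qquad w_k\ge0,\ \sum_k w_k=1.
\]
It then suffices to bound $\sum_k|w_k^{(p)}(t)|\le C_p h^{-p}$, because this directly yields $\|\mathbf{X}^{(p)}\|\le C_ph^{-p}\|\mathbf{X}\|_\infty$. We would compute the derivatives via Fa\`a di Bruno / Leibniz, or more cleanly by induction. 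Set $u_k=(t-t_k)/h$. Then $\frac{\mathrm{d}}{\mathrm{d}t}k_h(t-t_k)=-h^{-1}u_kk_h(t-t_k)$, hence
\[
w_k'=-h^{-1}w_k\,(u_k-\bar u),\qquad \bar u=\sum_j w_ju_j.
\]
Iterating this, $w_k^{(p)}$ equals $h^{-p}w_k$ times a polynomial in $u_k$ and the weighted central moments of $u$ under the weights $w$. So $\sum_k|w_k^{(p)}|$ is controlled by $h^{-p}$ times weighted moments $\sum_j w_j|u_j-\bar u|^m$ for $m\le 2p$.

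\textbf{Main obstacle.} The hard step is showing that these weighted moments are bounded independently of the data, i.e.\ that the normalized Gaussian weights have uniformly bounded moments in the scaled variable. This fails in the worst case. When $t$ lies far from all sample points, say beyond $t_N$, the weights concentrate on the nearest points, and the spread of $u$ depends on gaps of size $\Delta t/h$. Our first attempt will use Gaussian tail domination. Because $w_j\propto e^{-u_j^2/2}$, points with $|u_j-u_{k^*}|$ large, where $k^*$ is the nearest point, are exponentially suppressed relative to $k^*$. This gives moments bounded by a constant depending on $p$ and on the ratio of the maximal gap to $h$ within $[0,T]$. We therefore expect to need an assumption that observations are quasi-uniform at scale $h$, with $\max_k\Delta t_k\lesssim h$, and we will make it explicit and fold it into $C_p$. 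Under that assumption both cases give the stated $C_ph^{-p}\|\mathbf{X}\|_\infty$.
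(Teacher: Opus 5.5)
\textbf{Comparison with the paper.} Your opening step is the whole of the paper's proof. The paper models both smoothers as convolution with $K_h=h^{-1}K(\cdot/h)$, for which $\|(K_h*\mathbf{X})^{(p)}\|_\infty\le h^{-p}\int|K^{(p)}(u)|\,du\,\|\mathbf{X}\|_\infty$, and declares the bound immediate. You are right that this does not transfer to either estimator: Nadaraya--Watson is a ratio over the sample points, and the GP mean carries $(\mathbf{K}_h+\sigma^2\mathbf{I})^{-1}$. You are also right that no constant depending only on the kernel can work. Two observations $\pm1$ at $t=0$ and $t=L$ give the Nadaraya--Watson path $-\tanh\bigl(L(t-L/2)/(2h^2)\bigr)$, whose slope at the midpoint is $L/(2h^2)$. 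Two observations $\pm1$ at distance $\sqrt{2}\,\sigma h$ give a GP mean whose midpoint slope is about $1/(\sqrt{2}\,\sigma h)$ for small $\sigma$. So your explicit dependence on $\sigma$ and on the sampling is necessary. Your first GP route, $\|\boldsymbol\alpha\|_1\le N\sigma^{-2}\|\mathbf{X}\|_\infty$, is a complete proof of that corrected form.

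\textbf{Main gap: the Nadaraya--Watson moment bound.} Your recursion for $w_k^{(p)}$ is fine. The false step is the claim that Gaussian tail domination bounds $\sum_j w_j|u_j-\bar u|^m$ in terms of $p$ and $\max_k\Delta t_k/h$ alone. The suppression $w_j/w_{k^*}\le e^{-(u_j^2-u_{k^*}^2)/2}$ holds point by point, and nothing limits how many points sit in the tail. For a counterexample, take observations on a grid of spacing $h$. Add a cluster of $M$ observations packed within width $\ll h/R$ of $t+Rh$, with $M$ of order $e^{R^2/2}$ chosen so that the cluster carries half the weight at $t$. Put $x=+1$ on the cluster and $x=-1$ elsewhere. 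Then $\mathrm{Var}_w(u)\approx R^2/4$ and $|\mathbf{X}_h'(t)|=h^{-1}|\mathrm{Cov}_w(u,x)|\approx R/(2h)$, which is unbounded in $R$ although every gap is at most $h$.

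Your example of $t$ beyond $t_N$ is also not the dangerous case. There the weights only concentrate further on the last observations, and in any case $[0,T]=[t_1,t_N]$. The dangerous cases are the middle of a wide gap and a sharp change in sampling density.

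What is missing is a two-sided local density hypothesis, e.g.\ every window of length $h$ in $[0,T]$ contains between $n_{\min}\ge1$ and $n_{\max}$ observations. Then $\sum_j|u_j|^me^{-u_j^2/2}\le n_{\max}\sum_{n\in\mathbb{Z}}(|n|+1)^me^{-(|n|-1)_+^2/2}$ and $\sum_je^{-u_j^2/2}\ge n_{\min}e^{-1/2}$. So every weighted moment is $O_m(n_{\max}/n_{\min})$, and your induction closes with $C_p$ depending on $p$ and $n_{\max}/n_{\min}$.

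\textbf{Smaller gap: your second GP route.} The separation estimate bounds $\sum_k|\phi^{(p)}(u_k)|$, but you still need $\|\boldsymbol\alpha\|_\infty$. The only estimate you give, $\|\boldsymbol\alpha\|_\infty\le\|\boldsymbol\alpha\|_2\le\sigma^{-2}\sqrt{N}\|\mathbf{X}\|_\infty$, brings $\sqrt{N}$ back. An $N$-free bound needs an extra ingredient. One option is a row-sum bound on $(\mathbf{K}_h+\sigma^2\mathbf{I})^{-1}$, obtained from off-diagonal decay of inverses of well-conditioned matrices with rapidly decaying entries (Demko--Moss--Smith). Its constants depend on $\sigma$ and $h/\min_k\Delta t_k$.
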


\begin{corollary}[Smoothing NFE Scaling]
\label{nfe}
Substituting the bound from Theorem~\ref{thm:smoothing_bounds} into the NFE integral (Theorem~\ref{thm:step_size}) yields:
\[
    \text{NFE}(\mathbf{X}_h(t)) \lesssim \int_0^T \delta^{-\frac{1}{p+1}} \left( h^{-(p+1)} \right)^{\frac{1}{p+1}} \mathrm{d}t \lesssim \delta^{-\frac{1}{p+1}} h^{-1}.
\]
\end{corollary}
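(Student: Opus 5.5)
The plan is to combine the two earlier results directly, applying Theorem~\ref{thm:smoothing_bounds} at derivative order $p+1$ rather than $p$. Theorem~\ref{thm:step_size} bounds the NFE of an order-$p$ solver by
\[
\int_0^T \Bigl(\delta^{-1}\bigl\|\mathbf{X}_h^{(p+1)}(t)\bigr\|\Bigr)^{\frac{1}{p+1}}\,\mathrm{d}t .
\]
Theorem~\ref{thm:smoothing_bounds} holds for every derivative order, since $C_p$ is simply a kernel-dependent constant. Applied with order $p+1$ in place of $p$, it gives the uniform bound
\[
\sup_{t\in[0,T]}\bigl\|\mathbf{X}_h^{(p+1)}(t)\bigr\| \le C_{p+1}\,h^{-(p+1)}\,\|\mathbf{X}\|_\infty .
\]

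Next I substitute this pointwise bound into the integrand. The map $u\mapsto u^{1/(p+1)}$ is monotone on $[0,\infty)$, so the integrand is bounded pointwise by
\[
\delta^{-\frac{1}{p+1}}\bigl(C_{p+1}\|\mathbf{X}\|_\infty\bigr)^{\frac{1}{p+1}}\bigl(h^{-(p+1)}\bigr)^{\frac{1}{p+1}} .
\]
This is exactly the middle expression in the corollary once the constant is absorbed into $\lesssim$. Because the bound is uniform in $t$, integrating over $[0,T]$ just multiplies it by $T$. The paper works on the fixed interval $[0,T]$, as in the spline estimates, so $T$ is treated as a constant. We are left with
\[
T\,\bigl(C_{p+1}\|\mathbf{X}\|_\infty\bigr)^{\frac{1}{p+1}}\,\delta^{-\frac{1}{p+1}}h^{-1} \lesssim \delta^{-\frac{1}{p+1}}h^{-1},
\]
where the hidden constant depends on $T$, $p$, the kernel shape, and the data bound $\|\mathbf{X}\|_\infty$.

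The main point needing care is bookkeeping rather than analysis, since the argument is a one-line substitution. First, the order mismatch: Theorem~\ref{thm:smoothing_bounds} is stated for a generic order $p$, but it must be invoked at order $p+1$. Second, the constant: $\|\mathbf{X}\|_\infty$, and for the GP case any dependence on $\sigma^2$ and $N$ through $(\mathbf{K}_h+\sigma^2\mathbf{I})^{-1}$, must be folded into the implicit constant. I would state explicitly that these quantities are held fixed while $h$ and $\delta$ vary, so that $\lesssim$ expresses only the scaling in $h$ and $\delta$.
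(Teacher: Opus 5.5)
Your proposal is correct and is essentially the paper's own argument: substitute the uniform bound from Theorem~\ref{thm:smoothing_bounds} into the integral of Theorem~\ref{thm:step_size}, then integrate a constant over $[0,T]$. You invoke that bound at order $p+1$, which the paper also does implicitly (as its exponent $h^{-(p+1)}$ shows), and you track the order shift and the absorbed constant $T\,(C_{p+1}\|\mathbf{X}\|_\infty)^{1/(p+1)}$ more carefully than the paper, which leaves both inside $\lesssim$ without comment.
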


Selecting $h$ entails a trade-off between statistical bias and computational efficiency (NFE). While classically optimal bandwidths ($h \propto n^{-1/5}$) minimize reconstruction error, they can induce high-frequency stiffness, thereby inflating NFE. We treat $h$ as a task-aware hyperparameter, further detailed in Appendix~\ref{app:theory}, allowing us to optimize for the downstream prediction task.

\subsection{Multi-View Architecture}
\label{sec:multiview}

\begin{figure*}[b!]
    \centering
    \includegraphics[width=0.9\linewidth]{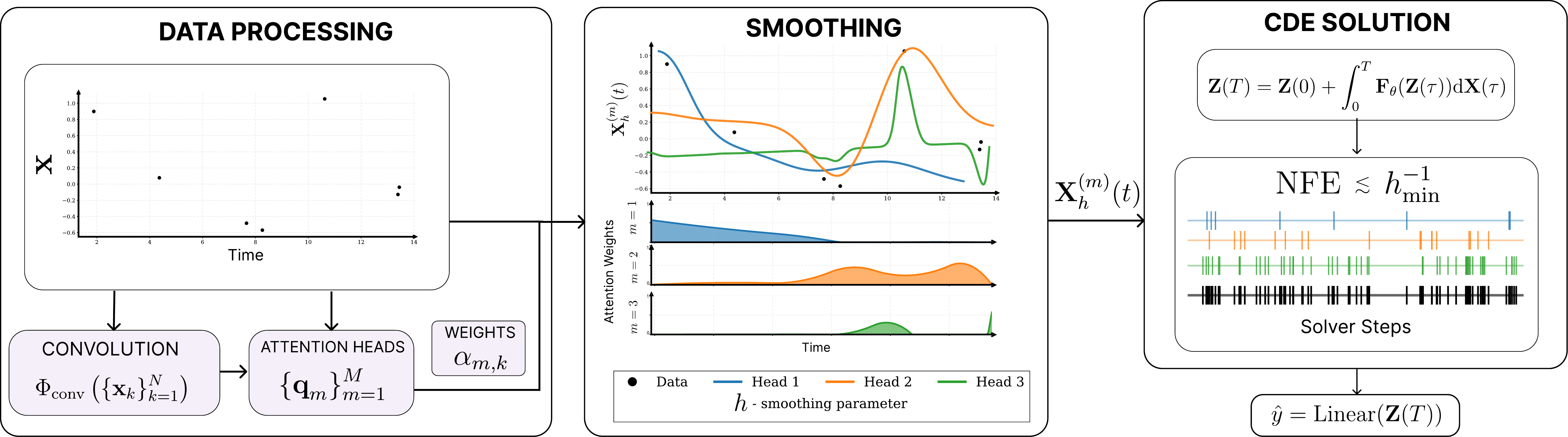}
    \caption{Proposed MV-CDE and MVC-CDE architectures.}
    \label{fig:workflow2}
\end{figure*}

\textbf{Overview and Motivation.}
Standard smoothing can eliminate high-frequency information that may be relevant to the prediction task. To mitigate information loss, we propose a Multi-View CDE (MV-CDE) architecture that integrates multiple distinct trajectories simultaneously. See the complete pipeline is illustrated in Figure~\ref{fig:workflow2}. Furthermore, to incorporate local temporal context, we introduce the Multi-View Convolutional CDE (MVC-CDE) extension. Rather than relying on a single smoothed approximation, these frameworks construct a set of paths differentiated by their attention weights and smoothness parameters.

\textbf{Feature Extraction and Convolution.}
Applying interpolation weights based solely on raw pointwise values may fail to capture complex local temporal dependencies. To address this, we introduce an optional feature-extraction stage. Let $\mathbf{u}_k$ denote a context vector associated with the observation~$\mathbf{x}_k$.

In the basic MV-CDE setting, the model computes attention scores from observations, setting $\mathbf{u}_k = \mathbf{x}_k$. For MVC-CDE, we account for temporal neighborhoods with a 1D convolutional network $\Phi_{\text{conv}}$ and fixed kernel size. Following VGG~\cite{simonyan2014very}, small kernels capture local context. The network applies discrete temporal convolutions over $\mathcal{S}$ to produce latent descriptors:
\[
    \{\mathbf{u}_k\}_{k=1}^N = \Phi_{\text{conv}}(\{\mathbf{x}_k\}_{k=1}^N).
\]
Specifically, these descriptors $\mathbf{u}_k$ are used exclusively to determine the attention weights of each observation. The continuous control path $\mathbf{X}^{(m)}(t)$ itself remains a function of the original observations $\mathbf{x}_k$, preserving the interpretability and dimensionality of the input signal.

\textbf{Weighted Paths.}
To capture multi-scale dynamics without over-smoothing, we introduce a multi-head architecture. Inspired by the Querying Transformer (Q-Former) from vision-language modeling~\citep{li2023blip}, we employ a set of learnable queries to extract relevant temporal features. We define $M$ learnable query vectors $\mathbf{Q} = [\mathbf{q}_1, \ldots, \mathbf{q}_M] \in \mathbb{R}^{M \times d_{\text{ctx}}}$, where $d_{\text{ctx}}$ is the dimension of $\mathbf{u}_k$. For each head $m$, we compute attention scores $\alpha_{m,k}$ using the context vectors $\mathbf{u}_k$:
\[
    \alpha_{m,k} = \text{softmax}_k \left( \frac{\mathbf{q}_m^\top \mathbf{u}_k}{\sqrt{d_{\text{ctx}}}} \right).
    % \label{attention_eq}
\]
These attention weights determine the relevance of each observation for the $m$-th trajectory. We construct $M$ distinct continuous paths $\mathbf{X}^{(m)}(t)$, utilizing the chosen smoothing scheme. Therefore, we state Weighted Interpolation schemes applied to the original observations $\mathbf{x}_k$:

\textbf{Weighted Kernel Path.} The weights modify the kernel density estimate~\cite{cai2001weighted}:
\begin{equation}
    \mathbf{X}^{(m)}(t) = \frac{\sum_{k=1}^N \alpha_{m,k} k_{h_m}(t - t_k) \mathbf{x}_k}{\sum_{j=1}^N \alpha_{m,j} k_{h_m}(t - t_j)}.
    \label{eq:weighted_kernel}
\end{equation}

\textbf{Weighted GP Path.} We introduce heteroscedastic noise into the GP posterior formulation~\cite{lazaro2011variational, le2005heteroscedastic}. We define a diagonal noise covariance matrix $\mathbf{\Sigma}_m$ where the noise variance for the $k$-th observation is inversely proportional to its attention score:
\[
    (\mathbf{\Sigma}_m)_{kk} = \frac{\sigma_{\text{base}}^2}{\alpha_{m,k} + \epsilon}.
\]
Observations with high attention scores are treated as high-precision points with low noise variance, while low-attention points are filtered out as noise. 

\begin{figure*}[t!]
    \centering
    \includegraphics[width=\textwidth]{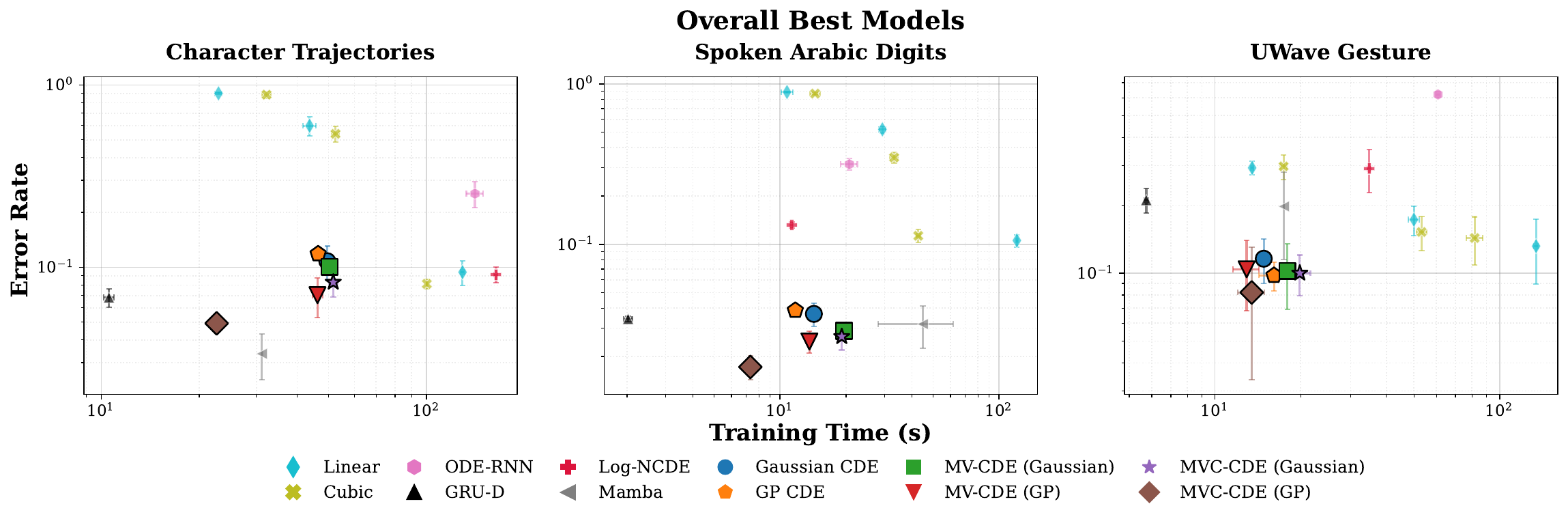}
    \caption{\textbf{Pareto Efficiency Plot.} Error Rate (log scale) versus Total Training Time (log scale). 
    }
    \label{fig:pareto_best}
\end{figure*}

\begin{equation}
    \mathbf{X}^{(m)}(t) = \mathbf{k}_h(t)^\top (\mathbf{K}_h + \mathbf{\Sigma}_m)^{-1} \mathbf{X},
    \label{eq:weighted_GP}
\end{equation}
where the kernel structures $\mathbf{k}_{h_m}$ and $\mathbf{K}_{h_m}$ follow the definitions in Section~\ref{sec:single_scale}, parameterized by head-specific lengthscales $h_m$, while $\mathbf{X}$ remains the shared observation matrix.

\textbf{Parallel Integration.}
The proposed architecture generates a set of $M$ control paths $\{\mathbf{X}^{(m)}(t)\}_{m=1}^M$. These paths are integrated simultaneously within a single ODE solver call. We define the global latent state $\mathbf{Z}(t)$ as the concatenation of the individual head states:
\[
    \mathbf{Z}(t) = \left[ \mathbf{z}^{(1)}(t), \dots, \mathbf{z}^{(M)}(t) \right] \in \mathbb{R}^{M \cdot d_{\text{hidden}}}.
\]
Each head evolves according to its own parameterized vector field $f_{\theta_m}$. The combined differential equation is:
\begin{equation}
    \frac{\mathrm{d}\mathbf{Z}(t)}{\mathrm{d}t} = \text{Concat}_{m=1}^M \left( f_{\theta_m}(\mathbf{z}^{(m)}(t)) \frac{\mathrm{d}\mathbf{X}^{(m)}(t)}{\mathrm{d}t} \right).
    \label{eq:concated_ode}
\end{equation}
This approach essentially treats the system as a block-diagonal ODE~\cite{walker2025structured}, where independent dynamics are solved jointly. It leads to the next theorem:

\begin{theorem}[Parallel Integration Bottleneck]
\label{thm:parallel_bottleneck}
Consider an adaptive ODE solver integrating the block-diagonal system~\eqref{eq:concated_ode} with $\mathbf{Z}(t)$ with a global error tolerance $\delta$. The total NFE scales jointly with the solver tolerance and the minimum smoothing parameter in the ensemble as:
\[
    \text{NFE}_{\text{total}} \lesssim \delta^{-\frac{1}{p+1}} \left( \min_{m=1}^M h_m \right)^{-1}.
\]
\end{theorem}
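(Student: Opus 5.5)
The plan is to treat the block-diagonal system~\eqref{eq:concated_ode} as a single ODE in $\mathbf{Z}(t)$ and apply Theorem~\ref{thm:step_size} to it directly. The stacked system is driven by the concatenated control path $\mathbf{X}^{\mathrm{cat}}(t) = [\mathbf{X}^{(1)}(t),\dots,\mathbf{X}^{(M)}(t)]$. So it suffices to bound the $(p+1)$-th derivative of $\mathbf{X}^{\mathrm{cat}}$ uniformly on $[0,T]$. The key structural point is that a single adaptive solver uses one shared step size $\eta_i$ for all blocks. It controls the LTE through a norm of the full error vector. Under the max-norm, or any norm equivalent to it up to a factor depending only on $M$, the accepted step is therefore dictated by the block with the largest local error, i.e.\ the roughest path.

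\textbf{Step 1: reduce to the concatenated path.} I would note that the LTE of the stacked system is bounded by the maximum over blocks of the per-block LTE. Each per-block LTE is controlled, as in the proof of Theorem~\ref{thm:step_size}, by $\eta^{p+1}\|\mathrm{d}^{p+1}\mathbf{X}^{(m)}/\mathrm{d}t^{p+1}\|$ times a constant depending on bounds for $f_{\theta_m}$ and its derivatives. Hence
\[
\Big\|\tfrac{\mathrm{d}^{p+1}\mathbf{X}^{\mathrm{cat}}}{\mathrm{d}t^{p+1}}\Big\| \lesssim \max_m \Big\|\tfrac{\mathrm{d}^{p+1}\mathbf{X}^{(m)}}{\mathrm{d}t^{p+1}}\Big\|.
\]

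\textbf{Step 2: derivative bound per head.} For each head I would invoke Theorem~\ref{thm:smoothing_bounds} with lengthscale $h_m$. This gives $\sup_t\|\mathrm{d}^{p+1}\mathbf{X}^{(m)}\|\le C_{p+1}h_m^{-(p+1)}\|\mathbf{X}\|_\infty$. The maximum over $m$ is then at most $C_{p+1}(\min_m h_m)^{-(p+1)}\|\mathbf{X}\|_\infty$.

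\textbf{Step 3: integrate.} Substituting into Theorem~\ref{thm:step_size} gives an integrand of at most $\delta^{-1/(p+1)}(\min_m h_m)^{-1}$ up to constants. Integrating over the fixed interval $[0,T]$ gives the claim, exactly as in Corollary~\ref{nfe}.

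The main obstacle is that Theorem~\ref{thm:smoothing_bounds} is stated for unweighted Nadaraya--Watson and homoscedastic GP paths. Here the heads use the attention-weighted estimators \eqref{eq:weighted_kernel} and \eqref{eq:weighted_GP}, so I must check that the constant $C_{p+1}$ stays independent of the weights $\alpha_{m,k}$ and of $\epsilon$.

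For the weighted kernel path, I would first try the argument behind Theorem~\ref{thm:smoothing_bounds} with the weights $\alpha_{m,k}k_{h_m}$ in place of $k_{h_m}$. The path is still a convex combination of the $\mathbf{x}_k$ with Gaussian-derived weights. Its derivatives are ratios of sums of Hermite-polynomial multiples of these weights, and such ratios are bounded by $C_p h_m^{-p}$ regardless of the positive multipliers $\alpha_{m,k}$.

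For the GP path, I would write $\mathbf{X}^{(m)}(t)=\sum_k k_{h_m}(t-t_k)\boldsymbol{\beta}_k$ and differentiate only the kernel, which yields the $h_m^{-p}$ factor. I would then bound the coefficients by $\|\boldsymbol{\beta}\|\le\|(\mathbf{K}_{h_m}+\mathbf{\Sigma}_m)^{-1}\|\,\|\mathbf{X}\|$. The inverse norm is at most $\max_k (\alpha_{m,k}+\epsilon)/\sigma_{\mathrm{base}}^2\le (1+\epsilon)/\sigma_{\mathrm{base}}^2$, since $\mathbf{K}_{h_m}\succeq 0$. This absorbs $N$, $\sigma_{\mathrm{base}}$ and $\epsilon$ into the constant, matching the treatment in the single-path case.
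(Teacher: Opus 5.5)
Your main line of argument matches the paper's proof. Because the solver measures the error of the concatenated state in a single norm, all heads share one step size. Each head's error is of order $\eta_i^{p+1}h_m^{-(p+1)}$ by Theorems~\ref{thm:step_size} and~\ref{thm:smoothing_bounds}, so the tightest head forces $\eta_i\lesssim\delta^{1/(p+1)}\min_m h_m$, and integrating over $[0,T]$ finishes. The paper simply applies Theorem~\ref{thm:smoothing_bounds} head by head and never mentions the attention weights, so your extra check goes beyond it. Your GP half of that check is sound: since $\alpha_{m,k}\le 1$, we have $\mathbf{K}_{h_m}+\mathbf{\Sigma}_m\succeq\frac{\sigma_{\text{base}}^2}{1+\epsilon}\mathbf{I}$. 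This gives a weight-independent constant, carrying factors of $N$ and $\sigma_{\text{base}}^{-2}$.

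The weighted-kernel half, however, rests on a false claim. Write $w_k(t)\propto\alpha_{m,k}k_{h_m}(t-t_k)$ and $U_k=(t-t_k)/h_m$. Each ratio you describe then equals $\pm h_m^{-j}\sum_k w_k(t)\,\mathrm{He}_j(U_k)$. Since $\mathrm{He}_j$ is unbounded, this is $O(h_m^{-j})$ only if $w(t)$ keeps its mass within $O(h_m)$ of $t$, and arbitrary positive multipliers can prevent that. For example, give two observations $t_a<t_b$ with $D=t_b-t_a\gg h_m$ weights close to $\tfrac12$. Give all other observations a weight $\eta$ small enough that the two heavy terms dominate the normalizer on $[t_a,t_b]$; softmax produces this once the logit gap is large, since nothing bounds $\mathbf{q}_m$. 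With $\bar t=(t_a+t_b)/2$ and $s$ the logistic function, $\mathbf{X}^{(m)}(t)\approx\mathbf{x}_a+(\mathbf{x}_b-\mathbf{x}_a)\,s\bigl(D(t-\bar t)/h_m^2\bigr)$. Hence $\|\dot{\mathbf{X}}^{(m)}(\bar t)\|\approx\frac{D}{4h_m^2}\|\mathbf{x}_b-\mathbf{x}_a\|$, and the $(p+1)$-th derivative is of size $(D/h_m^2)^{p+1}$, not $h_m^{-(p+1)}$. Unweighted Nadaraya--Watson across a sampling gap behaves the same way. This breaks your Step~2 for kernel heads, and with it the pointwise integrand bound of Step~3.

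The NFE integral of Theorem~\ref{thm:step_size} actually stays $O(1)$ on this example, because the steep layer has width $h_m^2/D$. So the conclusion is not refuted; it just cannot be reached through a weight-uniform sup bound.

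The simplest repair is to add two hypotheses: bounded logits $|\mathbf{q}_m^\top\mathbf{u}_k|/\sqrt{d_{\text{ctx}}}\le B$, so that $\alpha_{\max}/\alpha_{\min}\le e^{2B}$, and sampling density $\Delta t_k\lesssim h_m$. Then $\sum_k w_k(t)|U_k|^j\le e^{2B}C_j$, and your Hermite expansion yields $C_p(B)\,h_m^{-p}\|\mathbf{X}\|_\infty$. The constant then depends on $B$ rather than being independent of the weights.
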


This formulation highlights a critical design choice. While including heads with very small $h_m$ allows for capturing fine-grained details, it increases the overall computational cost. 
By optimizing $h_m$, our method balances multi-view representation capability with integration speed. To justify the use of $M > 1$, we observe that single-head models often fail to capture diverse temporal features, while multiple heads provide robustness through ensemble representations.

\begin{figure*}[t!]
    \centering
    \includegraphics[width=0.8\textwidth]{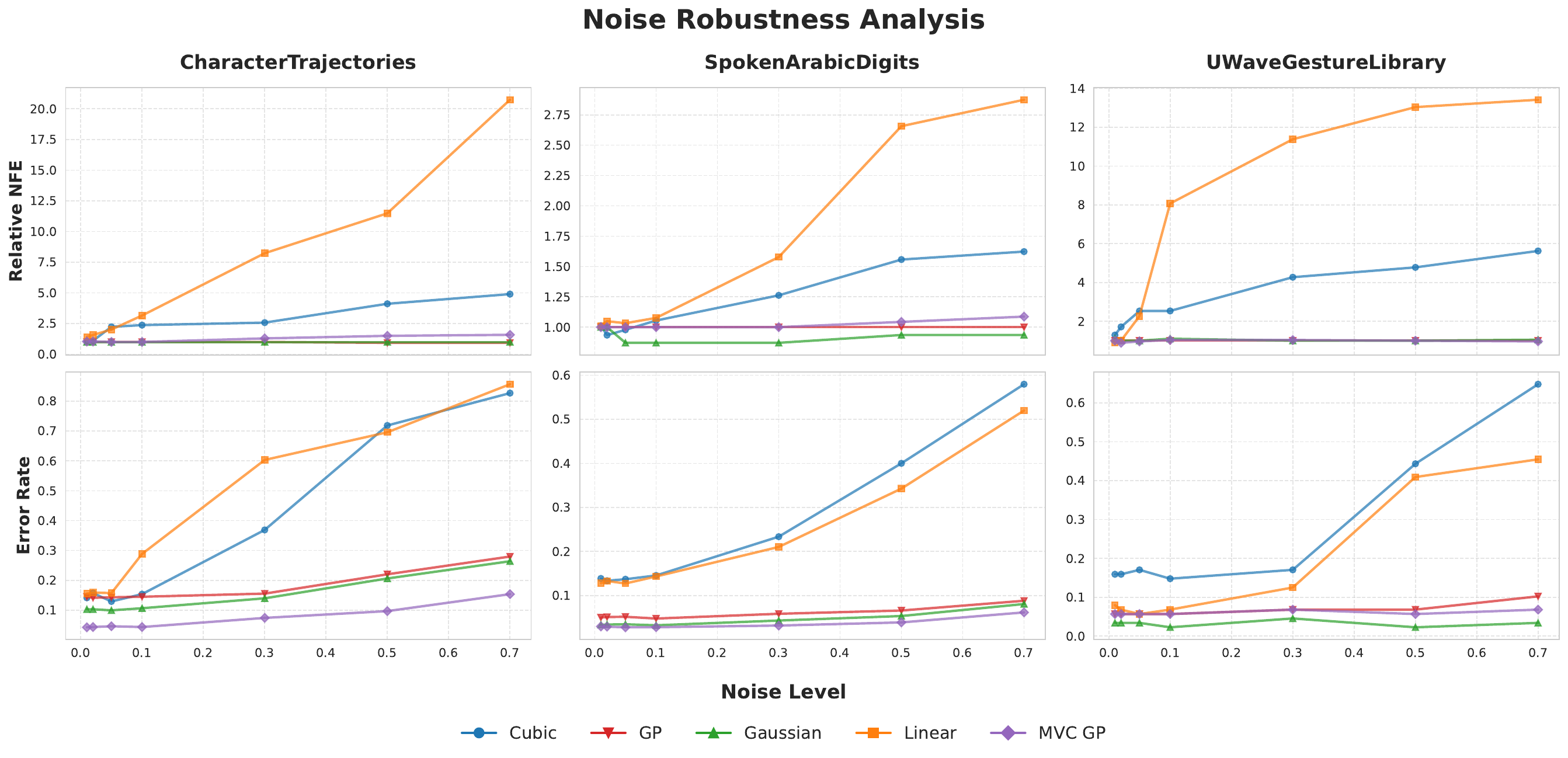}
    \caption{\textbf{Noise Robustness Analysis.} 
    The plots compare the stability of different path-construction methods under varying intensities of additive white Gaussian noise. 
    \textit{Top row}: Relative NFE normalized to the noise-free number. 
    \textit{Bottom row}: Classification error rate.}
    \label{fig:noise_robustness}
\end{figure*}

\textbf{Pipeline Summary.} The forward pass begins with an encoding step that maps the input $\mathbf{x}_k$ to $\mathbf{u}_k$ via a CNN or an identity function. This is followed by an attention mechanism that computes $M$-head scores $\alpha_{m,k}$ using learnable queries. These scores are subsequently used in a smoothing phase to generate continuous paths $\mathbf{X}^{(m)}(t)$ through weighted Gaussian processes~\eqref{eq:weighted_GP} or kernel regression~\eqref{eq:weighted_kernel}. Finally, the integration step yields the prediction by solving a block-diagonal NCDE driven by the concatenation of these smooth paths.

\section{Results}
\label{sec:results}
\begin{figure*}[b!]
    \centering
    \includegraphics[width=0.8\textwidth]{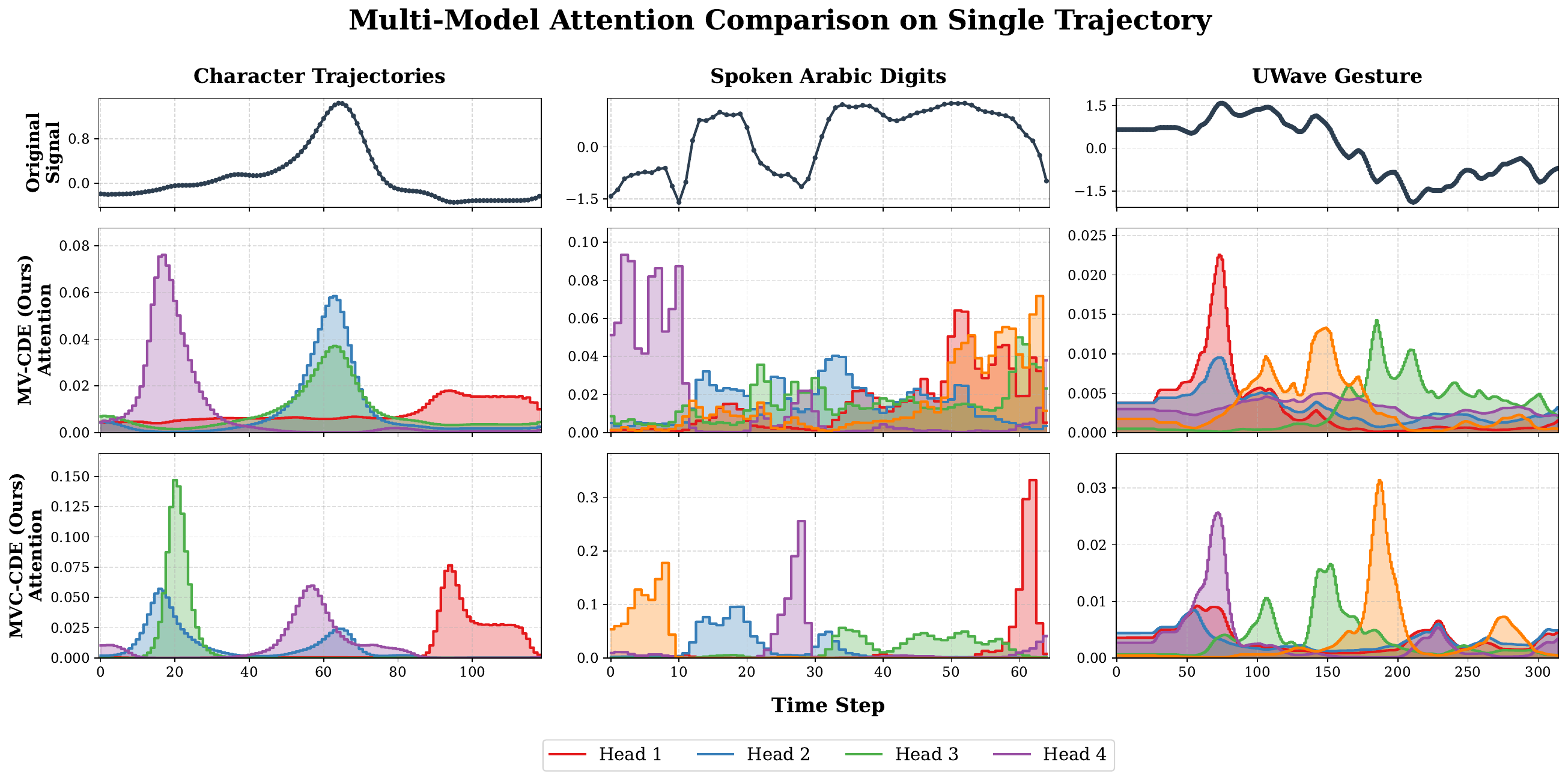}
    \caption{\textbf{Learned Attention Weights across Datasets. The trajectories visualize the example of learned attention weights for one sample assigned to different smoothing heads over time.}}
    \label{fig:attention_vis}
\end{figure*}

In this section, we present the comparative performance of our proposed architectures against the baselines. We primarily focus on classification accuracy and computational efficiency, followed by an ablation study and an analysis of model interpretability.

For detailed experimental settings and dataset descriptions, we refer to Appendix~\ref{sec:experiments}. 

\subsection{Main Performance Comparison}

We evaluated our method across three diverse benchmarks from the UEA Time Series Archive~\cite{bagnall2018uea}, selected to cover a broad spectrum of signal characteristics. Low-dimensional 
\textbf{CharacterTrajectories} captures high-frequency pen-tip dynamics. 
High-dimensional \textbf{SpokenArabicDigits} represents speech feature coefficients; 
and longer-range \textbf{UWaveGestureLibrary} involves complex human movement patterns.  This selection enables us to rigorously assess the robustness of our path-smoothing and attention mechanisms across varying regimes of trajectory roughness, feature dimensionality, and temporal resolution.
See Table~\ref{tab:model_comparison_v3} for detailed numerical results. 
The proposed \textbf{MVC-CDE (GP)} consistently achieves SOTA accuracy and significantly outperforms the baselines. The best models comparison in Figure~\ref{fig:pareto_best} also supports the evidence that our method attains the best accuracies and efficiency.

\subsection{Time-Series Forecasting under Missingness}

While our primary focus is sequence classification, we also evaluate our method's adaptability to generative forecasting tasks under irregular sampling. We tested models on the ETTm1 dataset with a 24-step forecasting horizon, in which 30\% of daily observations were randomly dropped to simulate extreme missingness. 
To adapt our method for forecasting, we applied it as the encoder, followed by a one-step linear layer, similar to DLinear % TODO: add link.

As detailed in Table~\ref{tab:forecasting}, MVC GP outperforms standard Cubic Spline NCDEs in both Mean Squared Error (MSE) and training speed. Furthermore, it yields substantially better predictions than standard GRU baselines, which struggle with the continuous-time nature of the missing data. This confirms that our path smoothing and attention mechanism provide robust representational capacity not just for sequence summarization, but also for generative forecasting tasks. Full architectural details and hyperparameters are provided in Appendix~\ref{sec:experiments}.

Crucially, this forecasting setup also validates the applicability of our method in strictly online, streaming scenarios. While global GP and Kernel smoothing are formally offline operations that require access to the entire trajectory, processing data in rolling causal windows (e.g., fitting the interpolant anew for each history window) enables robust causal forecasting without incurring prohibitive latency. As demonstrated in our computational breakdown (Appendix C.3), the trajectory-fitting overhead remains small, ensuring that the model can be effectively deployed in low-latency environments.

\begin{table}[t] 
\caption{\textbf{Time-Series Forecasting under Extreme Missingness (ETTm1).} Results for a 24-step horizon with 30\% random daily observation drops.}
\label{tab:forecasting}
\vskip 0.15in
\begin{center}
\begin{small}
\begin{tabular}{lcc}
\toprule
\textbf{Model} & \textbf{MSE $\downarrow$} & \textbf{Training Time (s) $\downarrow$} \\ 
\midrule
GRU & $0.476 \pm 0.007$ & $\mathbf{33 \pm 3}$ \\
Neural CDE (Cubic) & $0.487 \pm 0.004$ & $625 \pm 18$ \\
\midrule
\textbf{MVC GP (Ours)} & $\mathbf{0.428 \pm 0.009}$ & $248 \pm 25$ \\
\bottomrule
\end{tabular}
\end{small}
\end{center}
\vskip -0.1in
\end{table}

\subsection{Computational Efficiency}

Our method yields substantial training speedups over baselines. As shown in our experiments in Figure~\ref{fig:pareto_best}, MVC-GP demonstrates a speedup ranging from \textbf{$4.3\times$} to \textbf{$14.5\times$} compared to standard CDE methods, and up to \textbf{$16\times$} faster than ODE-RNN on complex tasks like \textit{UWaveGestureLibrary}, while requiring significantly fewer NFE to solve the underlying differential equations. To ensure a fair comparison, we include multiple points for Linear and Cubic baselines by varying their solver tolerances; this demonstrates that their efficiency cannot be matched by simply reducing numerical precision without a significant loss in accuracy.

These results confirm that MVC-GP not only improves classification performance but also ensures high computational efficiency, making it suitable for scalable time-series modeling.

\subsection{Noise Robustness Analysis}

To evaluate the stability of path-construction mechanisms under perturbation, we subject the best-performing models to a robustness test. We inject additive white Gaussian noise into the test set features, sweeping the noise level across a range of intensities. We monitor both the degradation in test accuracy and the variation in the Average NFE in Figure~\ref{fig:noise_robustness}.

The results confirm the hypothesis that smoothing-based methods maintain efficient solver dynamics. As illustrated in Figure~\ref{fig:noise_robustness}, direct interpolation methods exhibit a drastic increase in NFE and error rates as noise increases. In contrast, our smoothing approach ensures that the NFE remains almost constant and the error rate degradation is significantly slower, demonstrating superior robustness to noisy inputs.

\subsection{Impact of Head Number and Smoothing}

To understand the scalability of the Multi-View architecture, we analyzed how model performance depends on the number of attention heads and the smoothing bandwidth $h$. In this experiment, all heads are initialized with the same bandwidth $h$, and we observe the effect of increasing $M$ from 1 to 8.

As illustrated in Figure~\ref{fig:heads_ablation}, increasing the number of heads generally reduces error rates across all datasets. This effect is most pronounced in configurations with high smoothing parameters, where a single head's focus results in significant information loss due to over-smoothing. By increasing $M$, the attention mechanism enables the solver to leverage multiple representations of the smoothed trajectory, effectively recovering performance.

However, the performance gain follows a law of diminishing returns. We observe that for configurations with GP interpolation, the error rate tends to plateau around $M = 4$. 

\subsection{Interpretability and Attention Dynamics}

\begin{figure*}[b!]
    \centering
    \includegraphics[width=0.8\textwidth]{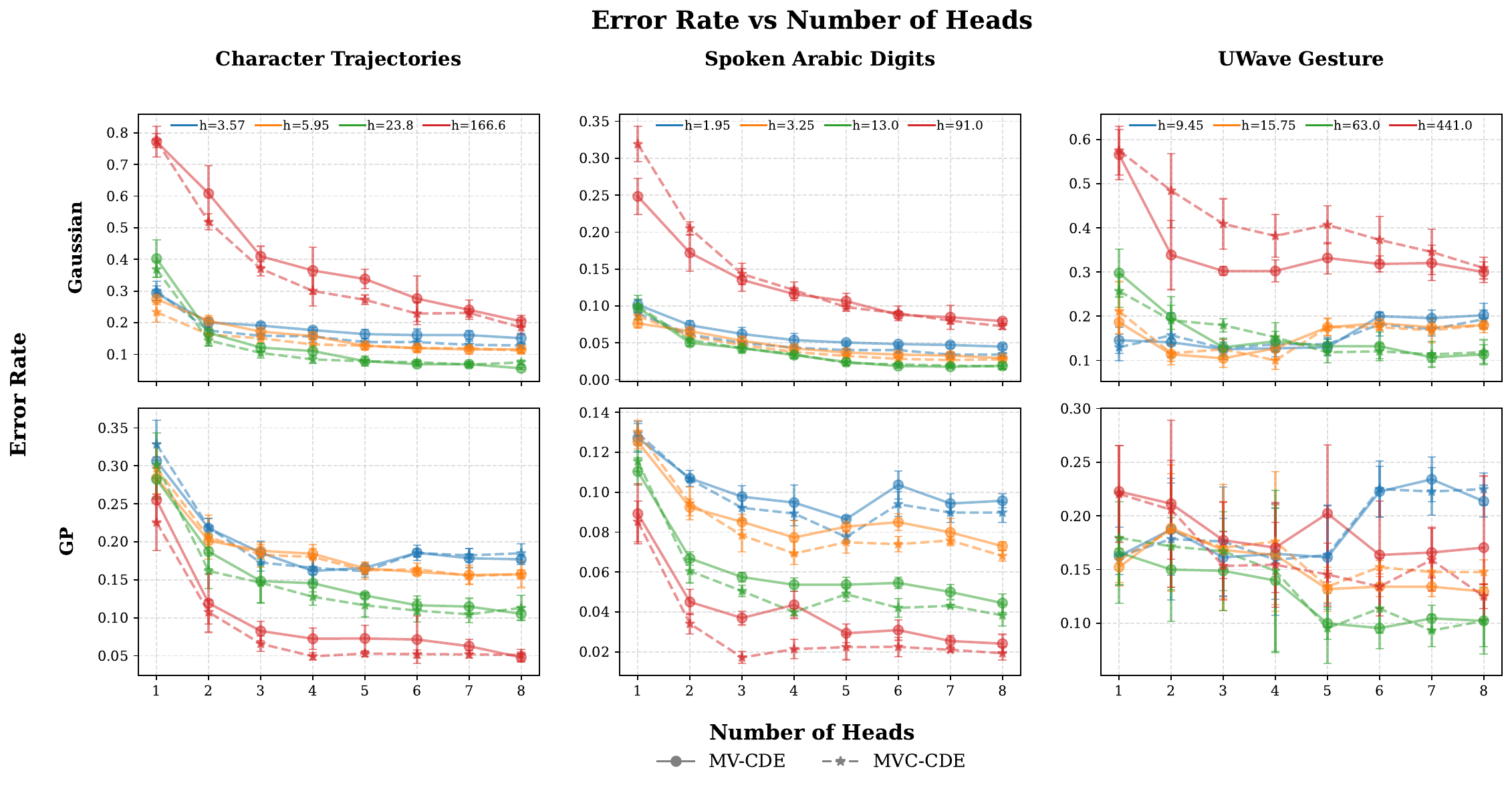}
    \caption{\textbf{Ablation: Impact of Head Count.} The plots display the error rate as a function of the number of heads across different fixed smoothing bandwidths $h$ for MV-CDE and MVC-CDE.}
    \label{fig:heads_ablation}
\end{figure*}

Figure~\ref{fig:attention_vis} compares the attention patterns of the trained MV model against the MVC architecture.

\textbf{Dataset-Specific Dynamics.}
The behavior of the attention heads varies distinctly across datasets, reflecting the nature of the underlying signals:

\begin{itemize}
    \item \textbf{CharacterTrajectories (Low-Dimensional):} The contrast is most visible here: MV attention is blurry, whereas MVC exhibits sharp, localized activations, effectively filtering the trajectory into distinct temporal events.

    \item \textbf{SpokenArabicDigits (High-Dimensional):} As shown in Figure~\ref{fig:attention_vis}, the MV maps are dominated by high-frequency vertical striations induced by the high dimensionality of the input. This pattern indicates that the model reacts to noise and raw signal jumps at individual timestamps. In contrast, MVC smoothes out local noise via convolutions, allowing the attention mechanism to focus on more robust, longer-term features.
    
    \item \textbf{UWaveGestureLibrary (Long Sequences):} Both models capture the periodic nature of the gesture data characteristic of long sequences. While the sheer length of the signal makes specific head interpretation less distinct than in shorter tasks, MVC specifically benefits from the input's low dimensionality. This allows it to effectively leverage multi-head capabilities to isolate different aspects of the signal despite the temporal complexity.
\end{itemize}

\subsection{Extended Analysis and Ablation Studies}
\label{sec:extended_ablations}
Due to space constraints, comprehensive ablation studies—including evaluations on high-dimensional scalability (963-dimensional PEMS-SF), robustness to high irregular sampling rates (Speech Commands), impact of different adaptive solver orders, head diversity regularization, and comparisons with traditional smoothing splines—are provided in Appendix~\ref{app:experiments}.

% \subsection{Extended Analysis and Ablation Studies}
% \label{sec:extended_ablations}
% Due to space constraints, we provide several critical ablation studies and extended analyses in Appendix~\ref{app:experiments}:
% \begin{itemize}
%     \item \textbf{High-Dimensional Scalability (App.~\ref{app:high_dim}):} We evaluate our continuous-time framework on the highly multivariate (963-dimensional) PEMS-SF dataset, demonstrating superior accuracy and manageable memory footprint compared to state-space models like Mamba.
%     \item \textbf{Irregular Sampling (App.~\ref{app:irregular_sampling}):} We analyze model robustness under 30\% and 50\% observation drop rates on Speech Commands, showing that MVC GP maintains high accuracy and low NFE compared to standard splines.
%     \item \textbf{Adaptive Solver Orders (App.~\ref{app:solvers}):} We investigate the impact of different ODE solver orders (\texttt{bosh3}, \texttt{dopri5}, \texttt{dopri8}) on NFE, proving that smoothed paths enable the use of ultra-fast lower-order solvers.
%     \item \textbf{Head Diversity Regularization (App.~\ref{app:head_diversity}):} We explicitly quantify attention head diversity and demonstrate the trade-offs of using orthogonality regularization.
%     \item \textbf{Smoothing Splines Baseline (App.~\ref{app:smoothing_splines}):} We contrast our multi-view approach with traditional global smoothing splines, confirming the necessity of task-aware dynamic representation.
% \end{itemize}

\section{Limitations}
\label{sec:limitations}

While our approach significantly improves the efficiency of Neural CDEs, it introduces certain limitations:
\begin{itemize}
    \item \textbf{Solver Dependency:} Our NFE reductions rely explicitly on the step-size control mechanisms of adaptive ODE solvers. The method is not applicable to fixed-step solvers, which cannot dynamically benefit from path smoothness.
    \item \textbf{Bandwidth Sensitivity and Tuning:} Unlike parameter-free exact interpolations, our method requires selecting the bandwidth $h$. While we provide a log-spaced initialization strategy, finding the optimal bias-variance-NFE trade-off requires validation tuning.
    \item \textbf{Stiffest Head Bottleneck:} In the Multi-View architecture, the global integration step is constrained by the head with the smallest bandwidth. 
    \item \textbf{Loss of Frequency Information:} Although our Multi-View architecture mitigates the smoothing-induced information loss, high-frequency features are still inevitably discarded. In cases where these features are not noise, such as long-horizon forecasting, this can lead to significant performance degradation.
\end{itemize}

\section{Conclusion}
\label{sec:conclusion}

In this work, we addressed the fundamental trade-off between trajectory regularity and representation power in Neural CDEs. We identified that standard interpolation schemes induce numerical stiffness, forcing adaptive solvers to perform excessive function evaluations. To overcome this, we proposed MVC-CDE, a novel architecture that replaces deterministic splines with attentive multi-scale smoothing. By dynamically aggregating information from multiple regularized GP paths, our method effectively decouples the integration cost from input noise. Empirical evaluations on real-world benchmarks demonstrate that MVC-CDE achieves SOTA  accuracy while reducing the Number of Function Evaluations by up to an order of magnitude. These results establish path smoothing not merely as a preprocessing heuristic but as a critical architectural component for scalable, efficient continuous-time sequence modeling.

% \section*{Software and Data}
% The source code is available on GitHub at: \url{https://github.com/awesomeslayer/Efficient_NCE_via_Attentive_Kernel_Smoothing}. 

\section*{Acknowledgements}

The work was supported by the grant for research centers in the field of AI provided by the Ministry of Economic Development of the Russian Federation in accordance with the agreement 000000C313925P4F0002 and the agreement №139-10-2025-033.

\section*{Impact Statement}
This paper presents work whose goal is to advance the field of Machine Learning. There are many potential societal consequences of our work, none which we feel must be specifically highlighted here.

\bibliography{references}
\bibliographystyle{icml2026}

\newpage
\appendix\onecolumn

\section{Theoretical Analysis of Path Regularity and Solver Complexity}
\label{app:theory}

In this section, we provide a rigorous analysis of the computational complexity of Neural CDEs. We bridge the gap between the analytic properties of the control path $\mathbf{X}(t)$, specifically its Sobolev regularity, and the discrete behavior of adaptive ODE solvers. We derive explicit complexity bounds for deterministic interpolants versus the proposed smoothing priors and formally prove the bottleneck effect in the MV-CDE architecture.

\subsection{ODE Solver Dynamics and Step Size Control}
\label{app:solver_dynamics}

The computational cost of a Neural CDE is dominated by the numerical integration of the latent state $\mathbf{z}(t)$ governed by Eq.~\eqref{eq:ncde_integral}. Modern implementations employ embedded Runge-Kutta methods, such as the Dormand-Prince pair (Dopri5) of order $p=5$~\citep{dormand1980family}. These solvers dynamically adapt the step size $\eta_i = t_{i+1} - t_i$ to ensure the Local Truncation Error (LTE) remains below a specified tolerance $\delta$.

\begin{theorem*}[\ref{thm:step_size}]
[Step Size dependence on Control Regularity]
For a numerical method of order $p$, the asymptotic local truncation error at step $i$ is governed by the principal error function involving the $(p+1)$-th derivative of the solution~\citep{hairer1993solving}:
\begin{equation}
    \text{LTE}_i = C \cdot \eta_i^{p+1} \left\| \frac{\mathrm{d}^{p+1}\mathbf{z}(t)}{\mathrm{d}t^{p+1}} \right\| + \mathcal{O}(\eta_i^{p+2}),
\end{equation}
where $C$ is a method-dependent constant. Since $\mathbf{z}(t)$ is driven by $\mathbf{X}(t)$, the chain rule implies that higher-order derivatives of the state scale linearly with the higher-order derivatives of the control path. Specifically, assuming the vector field $f_\theta$ is Lipschitz continuous and bounded, we have $\left\| \frac{\mathrm{d}^{p+1}\mathbf{z}}{\mathrm{d}t^{p+1}} \right\| \propto \left\| \frac{\mathrm{d}^{p+1}\mathbf{X}}{\mathrm{d}t^{p+1}} \right\|$.
To satisfy the condition $\text{LTE}_i \le \delta$, the adaptive step size $\eta_i$ must satisfy:
\begin{equation}
    \eta_i \le \left( \frac{\delta}{C \cdot \| \mathbf{X}^{(p+1)}(t_i) \|} \right)^{\frac{1}{p+1}}.
\end{equation}
\end{theorem*}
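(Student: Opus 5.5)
The plan has three steps: control the local truncation error by the derivatives of $\mathbf{X}$, convert that into a local bound on the accepted step size, and then count steps by comparing the step sequence with the integral of the local step density.

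\textbf{Step 1 (derivatives of $\mathbf{z}$ in terms of $\mathbf{X}$).} I would start from the classical principal-error expansion of an order-$p$ Runge--Kutta pair, as in \citep{hairer1993solving}. The local error after a step of size $\eta_i$ equals $C\eta_i^{p+1}$ times an elementary-differential expression in the $(p+1)$-th derivative of the solution, up to $\mathcal{O}(\eta_i^{p+2})$. I would then bound $\mathbf{z}^{(p+1)}$ by differentiating $\dot{\mathbf{z}} = f_\theta(\mathbf{z})\dot{\mathbf{X}}$ repeatedly and applying Fa\`a di Bruno. The result is a sum of products of derivatives of $f_\theta$, which are bounded by the smoothness and boundedness assumptions, and derivatives $\mathbf{X}^{(j)}$ with $j\le p+1$.

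The dominant term is $f_\theta(\mathbf{z})\mathbf{X}^{(p+1)}$. I would treat the lower-order products as absorbed into the constant. This is justified when $\|\mathbf{X}^{(j)}\|^{(p+1)/j}\lesssim \|\mathbf{X}^{(p+1)}\|+1$, which holds for the smoothing paths of Theorem~\ref{thm:smoothing_bounds} since there $\|\mathbf{X}^{(j)}\|\sim h^{-j}$. In general, I would instead state the bound with an effective roughness $\max_j \|\mathbf{X}^{(j)}\|^{(p+1)/j}$ that reduces to the displayed one in the relevant regimes.

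\textbf{Step 2 (local step size).} From $\text{LTE}_i\le\delta$, the standard controller $\eta_{\text{new}}=\eta\,(\delta/\text{err})^{1/(p+1)}$ accepts steps with $\eta_i\asymp (\delta/\|\mathbf{X}^{(p+1)}(\xi_i)\|)^{1/(p+1)}$ for some $\xi_i$ in the step interval. The controller's safety factor and its bounded growth and shrink ratios keep this up to constants. Rejected steps cost at most a bounded multiple of accepted ones, since after a rejection the step is reduced by a factor bounded away from $0$ and $1$. Each step costs $s$ stages ($s=6$ or $7$ for Dopri5), so NFE is at most $s$ times a constant times the number of accepted steps.

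\textbf{Step 3 (counting steps).} I would define the step density $\rho(t)=(\delta^{-1}\|\mathbf{X}^{(p+1)}(t)\|)^{1/(p+1)}$. Then each accepted interval $[t_i,t_{i+1}]$ satisfies $\int_{t_i}^{t_{i+1}}\rho\,dt\gtrsim 1$, so the number of steps is $\lesssim \int_0^T\rho\,dt$. I would add $+1$, or a floor from the maximal step size, to handle regions where $\rho$ is nearly zero; this is absorbed into $\lesssim$.

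The main obstacle is this inequality $\int_{t_i}^{t_{i+1}}\rho\gtrsim 1$, because $\rho$ may vary within a step. I would first impose a slow-variation assumption, namely that $\rho$ changes by at most a constant factor over one accepted step. This holds for the smoothing paths, whose scale is $h$. With it, $\eta_i\rho(t)\asymp 1$ on each step interval, and summing gives the bound. For piecewise paths such as splines, I would apply the argument on each knot interval separately and add one step per breakpoint.
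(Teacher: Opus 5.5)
Your argument has the same skeleton as the paper's: principal LTE term $\Rightarrow$ per-step size $\Rightarrow$ $\text{NFE}\approx\int_0^T\eta(t)^{-1}\,\mathrm{d}t$. It departs from the paper at the two places where the paper's derivation does not hold as written, and in both places your version is the one that works.

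First, the paper asserts $\|\mathbf{z}^{(p+1)}\|\propto\|\mathbf{X}^{(p+1)}\|$ ``by the chain rule''. This is false in general. Already $\ddot{\mathbf{z}}=f_\theta'(\mathbf{z})[f_\theta(\mathbf{z})\dot{\mathbf{X}}]\dot{\mathbf{X}}+f_\theta(\mathbf{z})\ddot{\mathbf{X}}$ is nonzero on a knot interval of a piecewise-linear path, where $\ddot{\mathbf{X}}\equiv 0$. Your Fa\`a di Bruno bound is the right replacement. Every term of $\mathbf{z}^{(p+1)}$, and every elementary differential in the principal error, is a bounded derivative of $f_\theta$ times a product $\prod_r\mathbf{X}^{(j_r)}$ with $\sum_r j_r=p+1$. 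Each such term is therefore controlled by $\max_j\|\mathbf{X}^{(j)}\|^{(p+1)/j}$. The $j=1$ term of this effective roughness is exactly what yields the paper's own estimate $\text{NFE}_{\text{lin}}\lesssim\sum_k\|\Delta\mathbf{x}_k\|$, which the literal statement cannot give.

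Second, the paper derives an \emph{upper} bound $\eta_i\le(\delta/(C\|\mathbf{X}^{(p+1)}\|))^{1/(p+1)}$ and then concludes $\text{NFE}\lesssim\int(\delta^{-1}\|\mathbf{X}^{(p+1)}\|)^{1/(p+1)}\mathrm{d}t$. An upper bound on $\eta$ only gives a \emph{lower} bound on $\int\eta^{-1}$. An upper bound on NFE needs $\eta_i\gtrsim(\ldots)^{1/(p+1)}$ plus control of rejections, i.e.\ an analysis of the controller. Your Step~2 supplies this, and your Step~3 replaces the paper's unexplained ``continuous limit''. The paper's route buys a one-line formula in $\mathbf{X}^{(p+1)}$ alone. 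Yours buys a bound that actually holds, under explicit hypotheses: asymptotic regime, bounded controller factors, and slow variation.

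Two corrections to your own write-up.

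(i) Step~1 bounds the local error only from above, so Step~2 justifies only the half $\eta_i\gtrsim(\delta/\|\cdot\|)^{1/(p+1)}$ of your $\asymp$. That is the half the NFE bound needs. The statement's literal necessary condition $\eta_i\le(\ldots)^{1/(p+1)}$, however, requires a lower bound on the principal error. That would mean a nonzero error constant, no cancellation among elementary differentials, and $f_\theta(\mathbf{z})$ not nearly annihilating $\mathbf{X}^{(p+1)}$. Neither you nor the paper provides this.

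(ii) Theorem~\ref{thm:smoothing_bounds} gives only $\|\mathbf{X}_h^{(j)}\|\le C_jh^{-j}\|\mathbf{X}\|_\infty$, not $\sim$. Moreover, $\mathbf{X}_h^{(p+1)}$ generically vanishes at isolated points, so neither your absorption condition nor slow variation of $\rho$ itself holds there. You do not need either. With the effective roughness, the local error is bounded by $A\eta^{p+1}$ with $A\asymp h^{-(p+1)}$ uniformly in $t$. Then:
\begin{itemize}
\item every step of size $\le(\delta/A)^{1/(p+1)}$ is accepted;
\item once that size is reached, proposals never fall below a fixed fraction of it;
\item each rejection shrinks the step by a fixed factor, so the total number of rejections is also $\lesssim T(A/\delta)^{1/(p+1)}$.
\end{itemize}
This gives $\text{NFE}\lesssim T\,\delta^{-1/(p+1)}h^{-1}$, i.e.\ Corollary~\ref{nfe}, directly. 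The constant envelope plays the role of $\rho$, and no slow-variation assumption is needed.
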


Recall that NFE encompasses the total number of calls to the underlying vector field. The total NFE is proportional to the total number of steps required to traverse $[0, T]$. In the continuous limit, incorporating the tolerance constraint, this relates to the integral of the inverse step size up to a constant:
\begin{equation}
    \text{NFE} \lesssim \int_0^T \frac{1}{\eta(t)} \mathrm{d}t \lesssim \int_0^T \left( \frac{1}{\delta} \left\| \frac{\mathrm{d}^{p+1}\mathbf{X}(t)}{\mathrm{d}t^{p+1}} \right\| \right)^{\frac{1}{p+1}} \mathrm{d}t.
\end{equation}
This establishes that NFE is strictly determined by the $L_{1/(p+1)}$-quasi-norm of the $(p+1)$-th derivative of the control path. Unbounded or large derivatives force $\eta \to 0$, causing NFE to diverge.

\subsection{Regularity of Deterministic Interpolation}

Standard Neural CDEs utilize linear or natural cubic splines~\citep{deboor1978, reinsch1967} to construct $\mathbf{X}(t)$ from discrete observations $\mathcal{S} = \{(t_k, \mathbf{x}_k)\}_{k=1}^N$. We analyze the stiffness of these paths in the presence of input noise.

\textbf{Linear Interpolation.}
The path $\mathbf{X}_{\text{lin}}(t)$ is Lipschitz continuous ($C^0$) but its derivative is piecewise constant with discontinuities at the knots $t_k$. Adaptive solvers must restart the integration at each discontinuity~\citep{morrill2022on}. Moreover, the step size within an interval is constrained by the slope's magnitude. The total integration cost scales with the Total Variation of the sequence:
\begin{equation}
    \text{NFE}_{\text{lin}} \lesssim \sum_{k=1}^{N-1} \| \mathbf{x}_{k+1} - \mathbf{x}_k \|.
    \label{eq:nfe_linear_est}
\end{equation}
Crucially, for a fixed sampling interval $\Delta t$, if the signal contains additive noise $\epsilon \sim \mathcal{N}(0, \sigma^2)$, the term $\| \mathbf{x}_{k+1} - \mathbf{x}_k \|$ does not vanish but is dominated by the noise amplitude. Thus, $\text{NFE}_{\text{lin}}$ remains high regardless of the underlying signal smoothness.

\textbf{Natural Cubic Splines.}
The path $\mathbf{X}_{\text{cub}}(t)$ is $C^2$ continuous. However, the interpolation condition $\mathbf{X}(t_k) = \mathbf{x}_k$ forces the spline to traverse the exact geometry of the noise~\citep{wahba1990spline}. The stiffness is dominated by the second derivative, which approximates the discrete second-order finite difference:
\begin{equation}
    \left\| \frac{\mathrm{d}^2 \mathbf{X}_{\text{cub}}(t)}{\mathrm{d}t^2} \right\| \approx \frac{\| \mathbf{x}_{k+1} - 2\mathbf{x}_k + \mathbf{x}_{k-1} \|}{(\Delta t)^2}.
\end{equation}
Substituting this into the complexity bound yields:
\begin{equation}
    \text{NFE}_{\text{cub}} \lesssim \frac{1}{\Delta t} \sum_{k=2}^{N-1} \| \mathbf{x}_{k+1} - 2\mathbf{x}_k + \mathbf{x}_{k-1} \|.
    \label{eq:nfe_cubic_est}
\end{equation}
This result indicates that cubic splines are extremely sensitive to high-frequency noise, as the cost scales with the local variance normalized by $\Delta t$.

\subsection{Regularity of Smoothing Priors}

We contrast the deterministic approach with our proposed Kernel and GP smoothing methods. Let $\mathbf{X}_h(t)$ denote the trajectory obtained via convolution with a smoothing kernel $K_h(t) = \frac{1}{h} K(\frac{t}{h})$~\citep{wand1994kernel, tsybakov2008}.

\begin{theorem*}[\ref{thm:smoothing_bounds}][Derivative Bounds for Smoothing Kernels]
Let $\mathbf{X}_h(t)$ be constructed via Nadaraya-Watson regression~\citep{cai2001weighted} or as the posterior mean of a Gaussian Process with a stationary RBF kernel with lengthscale $h$~\citep{rasmussen2006}. The $p$-th derivative of the trajectory satisfies the following uniform bound:
\begin{equation}
    \sup_{t \in [0, T]} \left\| \frac{\mathrm{d}^p \mathbf{X}_h(t)}{\mathrm{d}t^p} \right\| \le \frac{C_p}{h^p} \|\mathbf{X}\|_\infty,
\end{equation}
where $C_p = \int |K^{(p)}(u)| du$ is a constant depending solely on the kernel shape, and $\|\mathbf{X}\|_\infty$ is the bound on the input observations.
\end{theorem*}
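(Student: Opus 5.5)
The plan is to write both smoothers as linear combinations of the observations with smooth, data-independent weight functions, $\mathbf{X}_h(t)=\sum_{k=1}^N w_k(t)\,\mathbf{x}_k$, and then to bound $\sup_t \sum_k |w_k^{(p)}(t)|$ by $C_p h^{-p}$. This gives the claim at once, since $\|\mathbf{X}_h^{(p)}(t)\|\le \|\mathbf{X}\|_\infty \sum_k |w_k^{(p)}(t)|$.

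The basic ingredient is the scaling identity for the RBF kernel. Write $k_h(t-t_k)=K\big((t-t_k)/h\big)$ with $K(u)=e^{-u^2/2}$. Then
\[
\frac{\mathrm{d}^p}{\mathrm{d}t^p}k_h(t-t_k)=h^{-p}K^{(p)}\!\big((t-t_k)/h\big),
\]
where $K^{(p)}(u)=(-1)^p\mathrm{He}_p(u)e^{-u^2/2}$ is a Hermite function, bounded and integrable. To control the kernel sum $S_p(t)=\sum_k |K^{(p)}((t-t_k)/h)|$ uniformly in $t$, I will compare it with a Riemann sum. If the design has minimal spacing $\Delta t_{\min}$, then
\[
S_p(t)\le \sup_u|K^{(p)}(u)| + \frac{h}{\Delta t_{\min}}\int|K^{(p)}(u)|\,\mathrm{d}u .
\]
In the normalised form $K_h=h^{-1}K(\cdot/h)$ used in the appendix, this is exactly where $C_p=\int|K^{(p)}|$ enters. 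The factor $h/\Delta t_{\min}$ is then absorbed into the constant or normalisation, as the statement does. I will state this design-regularity assumption explicitly.

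\emph{GP posterior mean.} The weights are $w(t)^\top=\mathbf{k}_h(t)^\top(\mathbf{K}_h+\sigma^2\mathbf{I})^{-1}$, and the matrix $A=\mathbf{K}_h+\sigma^2\mathbf{I}$ does not depend on $t$. Hence $\mathbf{X}_{GP}^{(p)}(t)=\mathbf{k}_h^{(p)}(t)^\top A^{-1}\mathbf{X}$, and only the kernel vector is differentiated. Because $\mathbf{K}_h\succeq 0$, we have $\|A^{-1}\|_2\le\sigma^{-2}$. Combining this with the entrywise bound $h^{-p}|K^{(p)}|$ on $\mathbf{k}_h^{(p)}(t)$ gives $\|\mathbf{X}_{GP}^{(p)}(t)\|\le h^{-p}\,\sigma^{-2}\,\|K^{(p)}((t-t_\cdot)/h)\|_2\,\|\mathbf{X}\|_{2}$, which reduces to the form in the statement. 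The same argument covers the heteroscedastic version with $\mathbf{\Sigma}_m$ in place of $\sigma^2\mathbf{I}$, with $\sigma^{-2}$ replaced by $\sigma_{\min}^{-2}$. The constant then depends on the noise level and on $N$ through the passage from $\ell_2$ to $\ell_\infty$ norms. I will record this dependence in $C_p$ rather than hide it.

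\emph{Nadaraya--Watson.} Write $\mathbf{X}_h=\mathbf{N}/D$ with $\mathbf{N}(t)=\sum_k k_h(t-t_k)\mathbf{x}_k$ and $D(t)=\sum_k k_h(t-t_k)$. Differentiating $D\mathbf{X}_h=\mathbf{N}$ $p$ times by the Leibniz rule gives the recursion
\[
D\,\mathbf{X}_h^{(p)}=\mathbf{N}^{(p)}-\sum_{j=1}^{p}\binom{p}{j}D^{(j)}\mathbf{X}_h^{(p-j)} .
\]
Each derivative satisfies $\|\mathbf{N}^{(j)}\|\le h^{-j}S_j\|\mathbf{X}\|_\infty$ and $|D^{(j)}|\le h^{-j}S_j$, while $\|\mathbf{X}_h\|\le\|\mathbf{X}\|_\infty$ holds because $\mathbf{X}_h$ is a convex combination of the observations. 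Induction on $p$ then yields $\|\mathbf{X}_h^{(p)}\|\le C_p h^{-p}\|\mathbf{X}\|_\infty$ with $C_p$ polynomial in the ratios $S_j/\inf_t D$. The weighted variant~\eqref{eq:weighted_kernel} is handled identically, with $\alpha_{m,k}$ inside the sums.

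The main obstacle is the lower bound $\inf_{t\in[0,T]}D(t)\ge c>0$, which makes the constant uniform. This requires the observations to be dense at scale $h$, i.e.\ $\max_k\Delta t_k\lesssim h$; otherwise $D$ can be exponentially small in the gaps, and the quotient derivatives blow up. For the attention-weighted version one additionally needs the $\alpha_{m,k}$ not to vanish on whole windows. I will handle this by stating the quasi-uniform design assumption explicitly: under it, $D(t)\ge K(1/2)\cdot\min(1,\,h/\max_k\Delta t_k)$. I will then show that the ratios $S_j/\inf D$ are bounded independently of $N$ and $h$, which is what allows the constant to be written as $C_p$.
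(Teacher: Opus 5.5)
You start from the same fact as the paper: both smoothers are linear in translates of the kernel, and $\frac{\mathrm{d}^p}{\mathrm{d}t^p}K\bigl((t-t_k)/h\bigr)=h^{-p}K^{(p)}\bigl((t-t_k)/h\bigr)$. The paper stops there and calls the bound immediate. In effect it treats $\mathbf{X}_h$ as a convolution $K_h*x$ of a bounded function. For such a convolution Young's inequality gives exactly $h^{-p}\int|K^{(p)}|\,\|x\|_\infty$, and that is where its $C_p$ comes from. Neither estimator has this form. Nadaraya--Watson is a ratio of two discrete sums, and the GP mean weights the kernel translates by $(\mathbf{K}_h+\sigma^2\mathbf{I})^{-1}\mathbf{X}$ rather than by $\mathbf{X}$. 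Your Leibniz recursion on $D\mathbf{X}_h=\mathbf{N}$ and your bound $\|(\mathbf{K}_h+\sigma^2\mathbf{I})^{-1}\|_2\le\sigma^{-2}$ supply exactly the steps the paper skips.

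The hypotheses you add are necessary, not artifacts of your method; the statement as written is false without them. Take two observations $(0,0)$ and $(L,1)$. Nadaraya--Watson gives $\mathbf{X}_h(t)=\bigl(1+e^{-L(t-L/2)/h^2}\bigr)^{-1}$, whose slope at $L/2$ is $L/(4h^2)$. This exceeds $2/h\ge C_1/h$ once $L>8h$. For the GP, take observations $\pm1$ at distance $\varepsilon\ll h$. The midpoint slope has magnitude about $(\varepsilon/h^2)/\bigl(\varepsilon^2/(2h^2)+\sigma^2\bigr)$. This is of order $1/\varepsilon$ for small $\sigma$, and it matches your $\sigma^{-2}$ bound to leading order when $\sigma^2\gg\varepsilon^2/h^2$. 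So your route proves the correctly qualified statement: design regularity for Nadaraya--Watson, and $\sigma$, $N$ and the design entering the GP constant. The paper's proof is shorter only because it asserts a kernel-only constant that does not hold.

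One inequality in your plan needs repair. The bound $D(t)\ge K(1/2)\min\bigl(1,h/\max_k\Delta t_k\bigr)$ is false when $\max_k\Delta t_k>h$. In the two-point example, $D(L/2)=2K\bigl(L/(2h)\bigr)$ is exponentially small, not of order $h/L$. When $\max_k\Delta t_k\le h$ the bound only gives $D\ge K(1/2)$. That cannot make $S_j/\inf D$ independent of $h$ and $N$, because $S_j$ grows like $h/\min_k\Delta t_k$.

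What you need is a lower bound on $D$ that grows at the same rate. If $\max_k\Delta t_k\le h$ and $T\ge h$, every $t\in[0,T]$ has at least $h/(4\max_k\Delta t_k)$ observations within distance $h/2$. Hence $D(t)\ge K(1/2)\,h/(4\max_k\Delta t_k)$ and $S_j/D\lesssim 1+\max_k\Delta t_k/\min_k\Delta t_k$, which your quasi-uniformity assumption controls. If $T<h$, all $|u_k|<1$ and the ratio is trivially bounded.

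A minor point: for $p\ge1$ the function $|K^{(p)}|$ has $p+1$ humps. The Riemann-sum comparison therefore needs a factor $p+1$ on the $\sup$ term, which only changes constants.
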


\begin{proof}
Both Kernel Smoothing and the GP posterior mean with RBF kernel can be expressed as linear operations on the kernel functions $k_h(t, \cdot)$. Since differentiation is a linear operator and commutes with the integral, we have $\frac{\mathrm{d}^p}{\mathrm{d}t^p} K_h(t) = h^{-p} K^{(p)}(t/h)$~\citep{wand1994kernel}. The bound follows immediately from the properties of the Gaussian kernel, where all derivatives are bounded and scale inversely with the bandwidth $h$.
\end{proof}

\begin{corollary*}[\ref{nfe}][Smoothing NFE Complexity]
Substituting the bound from Theorem~\ref{thm:smoothing_bounds} into the NFE integral (Theorem~\ref{thm:step_size}):
\begin{equation}
    \text{NFE}(\mathbf{X}_h) \lesssim \int_0^T \delta^{-\frac{1}{p+1}} \left( h^{-(p+1)} \right)^{\frac{1}{p+1}} \mathrm{d}t \lesssim \delta^{-\frac{1}{p+1}} h^{-1}.
\end{equation}
This confirms that the solver complexity is controlled directly by the hyperparameter $h$. Unlike deterministic splines, the NFE for smoothing priors does not depend on the input noise variance or the sampling grid $\Delta t$, but only on the chosen spectral filter width $h$.
\end{corollary*}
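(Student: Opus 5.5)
The argument is a direct substitution, with the derivative order in Theorem~\ref{thm:smoothing_bounds} taken to be $p+1$ rather than $p$. Here $p$ is the solver order, matching the $(p+1)$-th derivative in Theorem~\ref{thm:step_size}. The steps are as follows.
\begin{enumerate}
\item Invoke Theorem~\ref{thm:smoothing_bounds} with order $p+1$ to get the uniform bound
\[
\sup_{t\in[0,T]}\left\|\frac{\mathrm{d}^{p+1}\mathbf{X}_h(t)}{\mathrm{d}t^{p+1}}\right\|\le \frac{C_{p+1}}{h^{p+1}}\|\mathbf{X}\|_\infty .
\]
This holds for both the Nadaraya--Watson path and the GP posterior mean.
\item Insert this pointwise bound into the integrand of Theorem~\ref{thm:step_size}. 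The map $s\mapsto s^{1/(p+1)}$ is monotone on $[0,\infty)$, so the integrand is dominated pointwise by
\[
\delta^{-\frac{1}{p+1}}\,\bigl(C_{p+1}\|\mathbf{X}\|_\infty\bigr)^{\frac{1}{p+1}}\,h^{-1}.
\]
This expression no longer depends on $t$.
\item Integrate over $[0,T]$. This gives
\[
\text{NFE}(\mathbf{X}_h)\lesssim T\,\bigl(C_{p+1}\|\mathbf{X}\|_\infty\bigr)^{\frac{1}{p+1}}\,\delta^{-\frac{1}{p+1}}\,h^{-1}.
\]
\item Absorb $T$, $C_{p+1}$ and $\|\mathbf{X}\|_\infty^{1/(p+1)}$ into the implicit constant of $\lesssim$. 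The interval is fixed, $C_{p+1}$ depends only on the kernel shape, and the observations are bounded.
\end{enumerate}
The concluding remark follows from what the final constant depends on. It involves only the kernel, $T$, and the amplitude bound $\|\mathbf{X}\|_\infty$. It does not involve the grid spacing $\Delta t$ or finite differences of the data, unlike the spline estimates \eqref{eq:nfe_linear_est}--\eqref{eq:nfe_cubic_est}.

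Two minor points need care. First, I would note that the bound is meant as an asymptotic estimate in the regime of small $\delta$ and small $h$. For very large $h$, the right-hand side can fall below the minimal number of solver steps. I would therefore state the estimate up to an additive $O(1)$ for the fixed per-step overhead, for example the stages of one Dormand--Prince step. Second, rejected steps are counted in the NFE. Under the standard step-size controller, rejections are at most a constant fraction of the accepted steps, so they also disappear into the constant.

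The main obstacle is that the constant in Theorem~\ref{thm:smoothing_bounds} is less innocent than the notation suggests.

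For Nadaraya--Watson, $\mathbf{X}_h$ is a quotient $\sum_k k_h(t-t_k)\mathbf{x}_k \big/ \sum_j k_h(t-t_j)$, not a pure convolution. Its $(p+1)$-th derivative is obtained by the Leibniz/Fa\`a di Bruno rule. Each term is a product of kernel derivatives, each scaling as $h^{-1}$ per derivative, divided by powers of the denominator. I would bound this by showing two things:
\begin{itemize}
\item the weights $w_k(t)=k_h(t-t_k)/\sum_j k_h(t-t_j)$ form a convex combination;
\item each logarithmic derivative $\partial_t^{\,r}\log k_h$ is a polynomial in $(t-t_k)/h$ times $h^{-r}$, controlled under the normalized weights.
\end{itemize}
This requires a mild assumption that the design has no large gaps relative to $h$ in $[0,T]$, so that the denominator stays bounded below.

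For the GP, I would write $\mathbf{X}_h(t)=\sum_k \beta_k k_h(t-t_k)$ with $\beta=(\mathbf{K}_h+\sigma^2\mathbf{I})^{-1}\mathbf{X}$. Then $|\mathrm{d}^{p+1}k_h|\le c_{p+1}h^{-(p+1)}$, and $\|\beta\|_1\le \sqrt{N}\,\|\mathbf{X}\|/\sigma^2$. The resulting constant depends on $\sigma$ and $N$, which I would treat as fixed and absorb into $\lesssim$.

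This absorption is the step I expect to require the most justification. Its consequence is that the independence from noise in the concluding remark holds for fixed $\sigma$ and $N$, not uniformly in them.
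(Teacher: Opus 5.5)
Your proposal is correct and follows essentially the paper's argument: apply Theorem~\ref{thm:smoothing_bounds} at derivative order $p+1$, insert the resulting $t$-independent bound into the integral of Theorem~\ref{thm:step_size}, integrate over the fixed interval $[0,T]$, and absorb $T$, $C_{p+1}$ and $\|\mathbf{X}\|_\infty^{1/(p+1)}$ into the implicit constant. Your further caveats go beyond the paper, whose proof of Theorem~\ref{thm:smoothing_bounds} simply treats both smoothers as convolutions with $K_h$, and they hold up: the purely multiplicative bound needs an additive $O(1)$ as $h\to\infty$; Nadaraya--Watson genuinely needs gaps $\lesssim h$, since two observations a distance $D\gg h$ apart already give a slope of order $D/h^2$ at their midpoint; and the GP constant depends on $N$ and $\sigma$, so the closing independence from noise and $\Delta t$ only holds with $N$, $\sigma$ and $\|\mathbf{X}\|_\infty$ held fixed (on a fixed $[0,T]$, fixing $N$ also fixes $\Delta t$, which qualifies your own earlier remark about $\Delta t$).
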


\paragraph{Bandwidth Selection and Bias-Variance-NFE Trade-off.}
The selection of the bandwidth $h$ is critical. Classically, optimal bandwidths (e.g., $h \propto n^{-1/5}$ according to Silverman's rule~\citep{silverman2018density}) are chosen to minimize the Asymptotic Mean Integrated Squared Error (AMISE). However, in the context of Neural CDEs, these bandwidths often retain high-frequency local fluctuations that are artifacts of input noise rather than signal. 

For an adaptive ODE solver, these fluctuations act as high-frequency drivers that force the solver to take excessively small steps ($\eta_i \to 0$), leading to a massive increase in NFE. Consequently, classical statistical optimality trades off poorly against computational budget. Our approach treats $h$ as a task-aware hyperparameter tuned on a validation set. This allows for a principled increase in bandwidth ($h > c \cdot n^{-1/5}$), which deliberately oversmooths the path to dramatically lower NFEs. 

In our Multi-View architecture, this oversmoothing is mitigated by the ensemble: while individual heads may be overly smooth, the learnable query mechanism allows the model to selectively attend to the original raw inputs or aggregate multi-scale features, effectively recovering the task-relevant details that might be obscured by macroscopic smoothing in a single-head model.

\subsection{Complexity of Parallel Multi-Head Integration}
\label{app:parallel_proof}

We analyze the Multi-View architecture (MV-CDE and MVC-CDE), where the global state $\mathbf{Z}(t) = [\mathbf{z}^{(1)}(t), \dots, \mathbf{z}^{(M)}(t)]^\top$ involves $M$ parallel latent trajectories. Each head $m$ is driven by a control path $\mathbf{X}^{(m)}$ with a specific lengthscale $h_m$.

\begin{theorem*}[\ref{thm:parallel_bottleneck}][Parallel Integration Bottleneck]
Consider an adaptive ODE solver integrating the block-diagonal system $\mathbf{Z}(t)$ with a global error tolerance $\delta$. The total NFE scales jointly with the solver tolerance and the minimum smoothing parameter in the ensemble as:
\begin{equation}
    \text{NFE}_{\text{total}} \lesssim \delta^{-\frac{1}{p+1}} \left( \min_{m=1}^M h_m \right)^{-1}.
\end{equation}
\end{theorem*}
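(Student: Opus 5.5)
The plan is to reduce the block-diagonal system to the single-path analysis of Theorem~\ref{thm:step_size} and Corollary~\ref{nfe}. The key observation is that the adaptive solver uses \emph{one} global step size $\eta_i$ for the concatenated state $\mathbf{Z}(t)$. Its error estimate is a norm over all components of $\mathbf{Z}$. So the local truncation error of the joint system at step $i$ is controlled by the largest per-head error:
\begin{equation*}
\text{LTE}_i \lesssim \eta_i^{p+1} \max_{m} \left\| \frac{\mathrm{d}^{p+1}\mathbf{z}^{(m)}}{\mathrm{d}t^{p+1}} \right\| .
\end{equation*}
Here I take the solver's norm to be the max norm (or any norm equivalent to it, absorbing a factor depending on $M$ into the constant). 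Because the vector field in \eqref{eq:concated_ode} is block-diagonal, each $\mathbf{z}^{(m)}$ is driven only by $\mathbf{X}^{(m)}$. The chain-rule argument in the proof of Theorem~\ref{thm:step_size} therefore applies head by head, giving $\|\mathbf{z}^{(m),(p+1)}\| \lesssim \|\mathbf{X}^{(m),(p+1)}\|$, with constants depending on bounds for $f_{\theta_m}$ and its derivatives.

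The second step bounds each head's control derivative. Theorem~\ref{thm:smoothing_bounds} gives $\sup_t \|\mathbf{X}^{(m),(p+1)}(t)\| \le C_{p+1} h_m^{-(p+1)} \|\mathbf{X}\|_\infty$. The weighted paths \eqref{eq:weighted_kernel} and \eqref{eq:weighted_GP} are still linear combinations of RBF kernels $k_{h_m}(t-t_k)$. For the kernel path, the Nadaraya--Watson ratio structure is preserved by the positive weights $\alpha_{m,k}$. For the GP path, the posterior mean is $\sum_k \beta_k k_{h_m}(t-t_k)$ with coefficients $\beta = (\mathbf{K}_{h_m}+\mathbf{\Sigma}_m)^{-1}\mathbf{X}$. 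In both cases the same scaling argument gives a factor $h_m^{-(p+1)}$. Taking the maximum over $m$ yields $\max_m \|\mathbf{X}^{(m),(p+1)}\| \lesssim (\min_m h_m)^{-(p+1)}$.

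Finally, enforcing $\text{LTE}_i \le \delta$ gives $\eta_i \gtrsim \delta^{1/(p+1)} \min_m h_m$ uniformly in $t$. Substituting into $\text{NFE} \lesssim \int_0^T \eta(t)^{-1}\,\mathrm{d}t$ and integrating over the fixed interval $[0,T]$ gives the stated bound $\delta^{-1/(p+1)}(\min_m h_m)^{-1}$. A fixed number of stages per step, with rejected steps bounded by a constant factor, is absorbed into $\lesssim$ as in Theorem~\ref{thm:step_size}. The total NFE counts calls to the concatenated field, so there is no extra factor of $M$. The bound is also tight in the sense that the stiffest head dictates the shared step.

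The main obstacle is the uniform constant in step two for the weighted GP path. Its coefficients $\beta$ depend on $(\mathbf{K}_{h_m}+\mathbf{\Sigma}_m)^{-1}$, and the entries of $\mathbf{\Sigma}_m$ can be as large as $\sigma_{\text{base}}^2/\epsilon$ or as small as about $\sigma_{\text{base}}^2$. I would first try to bound $\|\beta\|_1$ using the smallest eigenvalue of $\mathbf{K}_{h_m}+\mathbf{\Sigma}_m$, which is at least $\min_k (\mathbf{\Sigma}_m)_{kk} \ge \sigma_{\text{base}}^2/(1+\epsilon)$. This gives $\|\beta\|_\infty \le \|\beta\|_2 \le (1+\epsilon)\sqrt{N}\,\|\mathbf{X}\|_\infty/\sigma_{\text{base}}^2$, a bound independent of $h_m$. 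I would then sum the kernel-derivative bounds. This introduces an $N$-dependence, but for a fixed dataset it is a constant and is absorbed into $\lesssim$, consistent with how Theorem~\ref{thm:smoothing_bounds} is stated.
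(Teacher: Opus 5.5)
Your proposal is correct and follows essentially the same route as the paper's proof. The solver's single step size on the concatenated state $\mathbf{Z}$, measured with a max-type error norm, forces every head's local error $\sim \eta_i^{p+1} h_m^{-(p+1)}$ (via Theorems~\ref{thm:step_size} and~\ref{thm:smoothing_bounds}) below $\delta$, so the smallest $h_m$ dictates the step and integrating $1/\eta$ over $[0,T]$ gives the bound. Your version is in fact more careful than the paper's in two places: you state the step-size relation as a lower bound on accepted steps, which is the direction needed for an upper bound on NFE (the paper writes $\eta_i \le \cdots$), and you check that the heteroscedastic GP coefficients $(\mathbf{K}_{h_m}+\mathbf{\Sigma}_m)^{-1}\mathbf{X}$ are bounded independently of $h_m$, which the paper simply takes from Theorem~\ref{thm:smoothing_bounds}.
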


\begin{proof}
Standard adaptive solvers (e.g., \texttt{dopri5} in \texttt{torchdiffeq}) control the error using a unified norm on the concatenated state vector. Let $\mathbf{e}_i \in \mathbb{R}^{\sum d_m}$ be the estimated local error vector at step $i$, composed of sub-vectors $\mathbf{e}_i^{(m)}$ for each head~\citep{hairer1993solving}. The step acceptance criterion is $\|\mathbf{e}_i\| \le \delta$.
Using the infinity norm (or broadly any $p$-norm for finite dimensions), this condition implies that the error in \textit{every} component must satisfy the tolerance:
\begin{equation}
    \forall m \in \{1, \dots, M\}: \quad \|\mathbf{e}_i^{(m)}\| \lesssim \delta.
\end{equation}
From Theorem~\ref{thm:step_size} and Theorem~\ref{thm:smoothing_bounds}, the error estimate for the $m$-th head scales as:
\begin{equation}
    \|\mathbf{e}_i^{(m)}\| \approx C \cdot \eta_i^{p+1} \cdot h_m^{-(p+1)}.
\end{equation}
To ensure $\|\mathbf{e}_i^{(m)}\| \le \delta$ for all $m$ simultaneously, the global step size $\eta_i$ must satisfy the tightest constraint:
\begin{equation}
    \eta_i \le \min_{m=1}^M \left( \frac{\delta}{C} \cdot h_m^{p+1} \right)^{\frac{1}{p+1}} = C^{-\frac{1}{p+1}} \delta^{\frac{1}{p+1}} \min_{m=1}^M h_m \lesssim \delta^{\frac{1}{p+1}} \min_{m=1}^M h_m.
\end{equation}
The global step size is therefore limited by the trajectory with the highest stiffness, which corresponds to the smallest lengthscale $h_{\min} = \min_m h_m$. Integrating over the interval $[0, T]$ yields the joint scaling law for the total computational cost:
\begin{equation}
    \text{NFE}_{\text{total}} \lesssim \int_0^T \frac{1}{\eta(t)} \mathrm{d}t \lesssim \delta^{-\frac{1}{p+1}} ({h_{\min}})^{-1}.
\end{equation}
Thus, the computational efficiency of the MV-CDE and MVC-CDE models is bottlenecked by the finest temporal scale represented in the multi-view ensemble.
\end{proof}

\subsection{Explicit Error Comparison}
\label{app:gp_optimality}
To strictly quantify the performance gap, we evaluate the Mean Squared Error (MSE) of the estimator $\hat{\mathbf{X}}(t)$ with respect to the true latent process $\mathbf{X}(t)$. Since both Gaussian Process regression and Nadaraya-Watson (Kernel) smoothing are linear smoothers, they can be written as $\hat{\mathbf{X}}(t) = \mathbf{w}(t)^\top \mathbf{x}$, where $\mathbf{w}(t)$ is a weight vector.

\textbf{Gaussian Process Error.} The GP weights $\mathbf{w}_{\text{GP}} = (\mathbf{K} + \sigma^2 \mathbf{I})^{-1} \mathbf{k}(t)$ are derived analytically by solving the optimization problem $\min_{\mathbf{w}} \mathbb{E}[\|\mathbf{X}(t) - \mathbf{w}^\top \mathbf{x}\|^2]$. The resulting minimum error variance is:
\begin{equation}
    \mathcal{E}_{\text{GP}}(t) = k(t, t) - \mathbf{k}(t)^\top (\mathbf{K} + \sigma^2 \mathbf{I})^{-1} \mathbf{k}(t).
\end{equation}
This expression represents the theoretical lower bound on the reconstruction error for any linear estimator under the assumed prior~\cite{rasmussen2006}.

\textbf{Kernel Smoothing Error.} The Nadaraya-Watson weights are heuristic: $\mathbf{w}_{\text{NW}} = \mathbf{k}(t) / (\mathbf{k}(t)^\top \mathbf{1})$. The error for this sub-optimal weight selection is strictly larger:
\begin{equation}
    \mathcal{E}_{\text{NW}}(t) = \mathcal{E}_{\text{GP}}(t) + \underbrace{(\mathbf{w}_{\text{NW}} - \mathbf{w}_{\text{GP}})^\top (\mathbf{K} + \sigma^2 \mathbf{I}) (\mathbf{w}_{\text{NW}} - \mathbf{w}_{\text{GP}})}_{\text{Excess Error} > 0}.
\end{equation}
The Excess Error term is a quadratic form with a positive-definite matrix, meaning $\mathcal{E}_{\text{NW}}(t) > \mathcal{E}_{\text{GP}}(t)$ everywhere, except in trivial cases. This mathematically confirms that for a fixed smoothing bandwidth $h$, the GP provides a strictly more accurate reconstruction of the control path.

\section{Experiments Setup}
\label{sec:experiments}

In this section, we detail the experimental setup used to evaluate the proposed MV-CDE and MVC-CDE architectures. Our evaluation focuses on the robustness of path-construction methods, their impact on the numerical efficiency of the ODE solver, and the resulting classification performance compared to baselines.

\subsection{Datasets}
\label{subsec:datasets}
We selected three multivariate time series classification benchmarks from the UEA~\citep{bagnall2018uea} and UCR~\citep{dau2018series} repositories. These datasets exhibit diverse characteristics in terms of sequence length, dimensionality, and signal regularity.

\textbf{CharacterTrajectories.} This dataset consists of 2,858 sequences recording the trajectory of a pen tip during handwriting (3 features: $x, y$-coordinates, pen tip force). The maximum sequence length is 182 time steps. The task is to classify inputs into 20 distinct characters.

\textbf{SpokenArabicDigits.} This dataset comprises 8,800 time series representing MFCCs extracted from spoken Arabic digits. It features high dimensionality (13 features) and highly variable sequence lengths (ranging from 4 to 93 frames). The task is a 10-class digit classification.

\textbf{UWaveGestureLibrary.} This dataset contains 4,480 sequences of accelerometer gesture data (3 continuous spatial dimensions: $x, y, z$). The sequence length is fixed at 315 time steps, and the objective is to distinguish between 8 gesture patterns.

To evaluate our model's robustness and scalability in the supplementary experiments, we utilized three additional datasets with the following configurations:

\textbf{ETTm1 (Electricity Transformer Temperature):} This dataset consists of 69,680 continuous measurements of electricity transformer metrics across 7 features (oil temperature and 6 power load indicators). For our generative forecasting task under extreme missingness (simulated by randomly dropping 30\% of observations), the data is windowed into multivariate sequences of length 24 to predict a 24-step future horizon.
    
\textbf{Speech Commands:} This dataset comprises 34,975 one-second audio recordings of spoken words processed into continuous sequences (20 MFCC features). The sequence length is fixed at 161 time steps, and the objective is to classify the input into one of 10 spoken word categories. We utilized this setup to evaluate model robustness under high irregular sampling rates (with 30\% and 50\% random drop rates).
    
\textbf{PEMS-SF:} This dataset contains 440 daily freeway occupancy rate records in the San Francisco Bay Area across 963 spatial sensor channels. The sequence length is fixed at 144 time steps (representing 10-minute intervals over 24 hours), and the task is to classify each day into 7 distinct days of the week. It is used specifically to benchmark the memory and computational scalability of continuous-time models in extremely high spatial dimensions.

\subsection{Data Preprocessing}
\label{subsec:preprocessing}
To ensure rigorous evaluation and reproducibility, we employed a standardized preprocessing pipeline across all experiments. We partitioned each dataset into training ($60\%$), validation ($20\%$), and testing ($20\%$) sets. All reported results are averaged over five independent random seeds.

Channel-wise Z-score normalization was applied using statistics computed solely on the training split to prevent data leakage. To accommodate variable-length sequences within batched continuous-time operations, we padded inputs to the maximum sequence length using the final observation value (rectilinear padding). This strategy yields a zero derivative ($dX/dt=0$) in padded regions, allowing the adaptive solver to traverse them efficiently. The input features remain purely spatial ($x_t \in \mathbb{R}^d$), and the time domain is defined as regular intervals $t \in \{0, 1, \dots, T\}$ with $\Delta t = 1$.

\subsection{Implementation Details and General Setup}
\label{subsec:general_setup}
All models were implemented in PyTorch and trained on a single NVIDIA L40 GPU using the Adam optimizer to minimize Cross-Entropy loss. The best models were selected based on validation accuracy checkpointing.

To ensure a fair computational comparison, the vector field $f_\theta$ for all ODE-based variants was parameterized by a Multilayer Perceptron (MLP) with $\tanh$ activations. Crucially, rather than using a fixed hidden dimension across all models, we meticulously adjusted the hidden layer dimensions of the Linear and Cubic CDE baselines to exactly match the total parameter count of our proposed multi-view architectures. This ensures that any performance gains are attributed to the architectural design and path construction, rather than an arbitrary difference in model capacity. We used the \texttt{dopri5} adaptive solver.

To quantify performance and efficiency, we utilize the following metrics:
\begin{itemize}
    \item \textbf{Total wall-clock time} is calculated as the sum of: (i) trajectory fit time, (ii) training duration, and  (iii) the final test evaluation pass.
    \item \textbf{Average NFE per batch} is reported during the test phase to assess solver efficiency. This metric reflects the numerical complexity of integrating the stiffest trajectory within a batch.
\end{itemize}

\subsection{Baselines and Hyperparameter Optimization}
\label{subsec:baselines}

\textbf{Non-CDE Baselines (ODE-RNN, GRU-D).}
We compare our methods against standard recurrent architectures capable of modeling continuous dynamics.
\begin{itemize}
    \item \textbf{ODE-RNN \citep{rubanova2019latent}:} The output layer of the ODE function is initialized with zeros ($dh/dt \approx 0$ initially) for stability.
    \item \textbf{GRU-D \citep{che2018recurrent}:} The decay layer weights are initialized to zero ($\gamma = 1$ initially) to prevent signal vanishing at the start of training.
    \item \textbf{Mamba \cite{gu2024mamba}:} A recent linear-time state-space model that bypasses ODE solvers entirely. We configured it with standard parameters (e.g., $d_{\text{state}}=16$, $n_{\text{layers}}=2$, utilizing parallel scans).
\end{itemize}
Unlike our proposed methods, these baselines required distinct hyperparameter tuning to achieve convergence. We optimized the learning rate (e.g., searching within $\{10^{-3}, 5\cdot10^{-3}, 10^{-2}\}$) and solver tolerance ($\{10^{-2}, 10^{-3}\}$) specifically for each dataset. For instance, ODE-RNN required a learning rate of $0.01$ on SpokenArabicDigits and $0.005$ on CharacterTrajectories to match the best performance.

\textbf{Log-Neural CDE.}
This baseline \citep{walker2024log} interpolates Log-Signatures computed over fixed time windows. Similar to the non-CDE baselines, we found that this model required higher learning rates than standard NCDEs. We optimized the window step size from $\{3, 5, 10, 20, 30\}$, Log-Signature depth $\{1, 2\}$, and the learning rate ($\{5\cdot10^{-3}, 10^{-2}\}$).

\textbf{Standard Neural CDEs (Linear \& Cubic)}~\citep{kidger2020neural} These models use fixed paths constructed via Linear Interpolation or Natural Cubic Splines. Crucially, to strictly evaluate the impact of path construction, these models utilized the exact same optimization schedule and learning rates as our proposed methods (detailed in Section~\ref{subsec:dataset_hyperparams}).

\subsection{Proposed Methods}
\label{subsec:proposed_methods}

\textbf{Kernel CDE \& GP CDE.}
These models employ smooth path construction using Nadaraya-Watson kernel smoothing or Gaussian Process posterior means to generate a single continuous trajectory per sample. Unlike the multi-view methods, these do not utilize attention mechanisms. We evaluated GP observation noise across $\sigma \in \{0.01, 0.05\}$.

\textbf{Smoothing Splines:} To contrast our attentive multi-view approach with traditional global smoothing, we implemented a baseline using SciPy's 
\texttt{UnivariateSpline}, which penalizes the second derivative based on a fixed global smoothing factor before computing CDE coefficients.

\textbf{MV-CDE \& MVC-CDE.}
These are the core contributions of this work, where $M \ge 1$ trajectories are constructed dynamically.
\begin{itemize}
    \item \textbf{MV-CDE:} Uses $M$ learnable query vectors $q_m \sim \mathcal{N}(0, 1)$ to compute attention weights and construct $M$ smooth trajectories via weighted interpolation.
    \item \textbf{MVC-CDE:} Incorporates a feature extraction stage (2-layer 1D CNN, kernel size 3, 128 hidden units) before attention computation. The attention weights are derived from latent features, while the continuous path interpolates the original input space.
\end{itemize}

\textbf{Global Smoothing Strategy.}
For all smoothing-based models, we swept over raw bandwidth factors in the range $[0.01, 9.0]$. The effective smoothing scale is adapted to the sequence length, such that $h_{\text{eff}} = h_{\text{raw}} \times T_{\text{max}}$.

\subsection{Dataset-Specific Optimization Settings}
\label{subsec:dataset_hyperparams}
The following settings were applied uniformly to Linear, Cubic, and our proposed models. As noted in Section~\ref{subsec:baselines}, other baselines relied on separately optimized parameters.

\begin{itemize}
    \item \textbf{CharacterTrajectories:} Batch size $256$, learning rate $10^{-4}$, weight decay $10^{-5}$, 25 epochs. Solver tolerance $10^{-3}$.
    \item \textbf{SpokenArabicDigits:} Batch size $2048$, learning rate $10^{-3}$, weight decay $10^{-4}$, 13 epochs. Solver tolerance $10^{-3}$.
    \item \textbf{UWaveGestureLibrary:} Batch size $512$, learning rate $10^{-3}$, weight decay $10^{-5}$, 30 epochs. Solver tolerance $10^{-4}$.
    \item \textbf{ETTm1 (Forecasting):} Context length 24, forecasting horizon 24. Learning rate $10^{-3}$ (AdamW optimizer), MSE loss, solver tolerance $10^{-4}$, 4 attention heads for ConvCDE. To strictly isolate the encoder's representational power, we avoided complex autoregressive decoders. Instead, we employed Reversible Instance Normalization (RevIN)\cite{kim2021reversible} to mitigate distribution shifts. The encoder (MVC, Neural CDE, or GRU) maps the input to an embedding, which is directly projected to the 24-step horizon via a simple linear head before RevIN denormalization.
    \item \textbf{Speech Commands:} Batch size $512$, learning rate $10^{-3}$, 20 epochs. Solver tolerance $10^{-2}$. A time scaling factor of 10 was applied to stabilize solver dynamics.
    \item \textbf{PEMS-SF:} Batch size $32$, learning rate $10^{-3}$, 30 epochs. Solver tolerance $10^{-3}$. (The explicit time channel was omitted due to the highly regular and synchronous nature of the underlying spatial grid).
\end{itemize}

\subsection{Ablation Studies and Visualization}
\label{subsec:ablations}

\textbf{Solver Tolerance Analysis.}
To investigate speed-accuracy trade-offs specifically for spline-based methods, we conducted ablation studies on the solver tolerance for the Linear and Cubic Neural CDE baselines. We varied the absolute and relative tolerances across $\{10^{-1}, 10^{-2}, 10^{-3}, 10^{-4}\}$ to determine whether relaxing precision constraints could accelerate these models without catastrophic accuracy loss. These results are shown in Figures~\ref{fig:single_kernel}, \ref{fig:diff_heads}, and \ref{fig:same_heads}.

\textbf{Impact of Head Count and Smoothing.}
We conducted comprehensive ablation studies varying the number of attention heads $M \in [1, 8]$. We tested two smoothing configurations: (1) \textit{homogeneous}, where the bandwidth $h$ is identical across all heads; and (2) \textit{heterogeneous}, where $h$ is distinct for each head (initialized with geometrically spaced values). This comparison, visualized in Figure~\ref{fig:heads_ablation}, assesses the solver's ability to integrate systems with multiscale stiffness.

\textbf{Noise Robustness Analysis.}
To evaluate the stability of path-construction mechanisms under perturbation, we subject the best-performing models to a robustness test. We inject additive white Gaussian noise $\xi \sim \mathcal{N}(0, \lambda^2 I)$ into the test set features, sweeping the noise level $\lambda$ across a range of intensities. We monitor both the degradation in test accuracy and the variation in the Average NFE. This experiment is designed to verify the hypothesis that smoothing-based methods (Kernel, GP) maintain efficient solver dynamics by filtering high-frequency jitter, whereas direct interpolation methods (Linear, Cubic) transmit noise directly into the vector field, forcing the adaptive solver to take excessive steps to track the irregularities.

\textbf{Attention Visualization.}
We explicitly extracted the attention weights from trained MV-CDE and MVC-CDE models. This enables a qualitative analysis of temporal feature selection, verifying whether the multi-head mechanism effectively focuses on informative trajectory segments while suppressing noise.

\textbf{Generative Time-Series Forecasting.}
To evaluate our method's capacity for sequence prediction under incomplete data, we set up a forecasting task using the ETTm1 dataset. We simulate random missingness by dropping 30\% of the observation points. Models are trained using a historical context length of 24 time steps to predict a future horizon of 24 steps, comparing MVC GP against standard Cubic Splines and discrete GRU baselines. Global GP and kernel smoothing are formally offline operations over each history window; in this experiment we do not precompute interpolants across the dataset---we fit the control path anew for each batch from the 24-step context only, which enforces a strict causal boundary between history and horizon.

We emphasize that this trade-off between offline path construction and computational efficiency is not unique to our method. As discussed by \citet{morrill2022on}, most standard Neural CDE implementations already rely on non-causal interpolants (e.g., natural cubic splines fitted with access to the full sequence), and are therefore not strictly suited to step-by-step streaming in the first place. Our contribution targets the orthogonal bottleneck of solver stiffness and NFE. In operational forecasting, a practical deployment pattern is to apply attentive smoothing over rolling causal windows---accepting batch- or window-level latency rather than single-step updates---which the ETTm1 setup is designed to approximate.

\textbf{Attention Head Diversity and Regularization.}
To measure the distinctiveness of the learned paths, we quantify diversity among attention heads on the UWaveGestureLibrary dataset. We evaluate the maximum pairwise Cosine Distance and Jensen-Shannon Divergence across the temporal attention maps. Additionally, we set up an optimization test introducing a similarity-penalizing loss term $\mathcal{L}_{\text{reg}}$ with a coefficient $\lambda=0.1$ to evaluate whether explicitly forcing head diversity improves or hinders representational capacity.

\textbf{Robustness to Sparse Grids (Irregular Sampling).}
To test CDE performance under high rates of missing data, we subject our architectures and standard spline baselines to a drop rate sweep of 30\% and 50\% on the Speech Commands dataset. We monitor the stability of the Average NFE and classification accuracy as the sampling density decreases.

\textbf{Adaptive ODE Solver Order.}
To assess the numerical sensitivity of our smoothed paths compared to deterministic splines, we evaluate model dynamics on the UWaveGestureLibrary dataset using different embedded Runge-Kutta adaptive solvers: \texttt{bosh3} (3rd order), \texttt{dopri5} (5th order), and \texttt{dopri8} (8th order). We track total training time and average NFEs to determine the optimal solver order for regularized trajectories.

\textbf{High-Dimensional Computational Scalability.}
To evaluate the limits of continuous-time integration, we benchmark our method on the 963-dimensional PEMS-SF dataset. We record training times and average NFEs, comparing our smooth multi-scale path formulation directly against the hardware-accelerated discrete parallel scan of the Mamba state-space architecture.

\textbf{Smoothing Spline Baseline Comparison.}
To demonstrate that our model performs a task-aware dynamic representation rather than a basic filtering operation, we benchmark our architecture against a traditional Smoothing Spline. Using SciPy's \texttt{UnivariateSpline}, we globally smooth the raw sequences with a length-based penalty before applying standard cubic CDE coefficients, and compare the resulting accuracy across CharacterTrajectories, SpokenArabicDigits, and UWaveGestureLibrary.

\section{Full Experimental Results and Memory Complexity}
\label{app:experiments}

In this section, we provide supplementary experimental analysis to support the claims made in the main text. Specifically, we analyze the limitations of single-kernel baselines, investigate the impact of smoothing initialization strategies on the accuracy-efficiency trade-off, and provide a granular breakdown of computational costs.

Table~\ref{tab:model_comparison_v3} presents the comprehensive comparison of all evaluated models, including precise hyperparameters used to achieve the reported results. 

Additionally, Table~\ref{tab:params_vram} details the total parameter counts and peak GPU memory (VRAM) usage for the evaluated models. To ensure a fair comparison, the baseline CDEs were strictly capacity-matched to our MVC GP architecture. While our multi-view approach introduces a moderate memory overhead compared to standard CDEs—due to the simultaneous integration of multiple paths—it remains exceptionally memory-efficient compared to discrete state-space models. Specifically, Mamba's hardware-aware parallel scan, while computationally fast, requires materializing large hidden state expansions across the entire sequence length. This results in peak memory usage up to $50\times$ higher than that of our continuous-time approach for long or high-dimensional datasets (e.g., 62 GB vs 1 GB on SpokenArabicDigits). This highlights a significant practical advantage of our smoothed CDE formulation in memory-constrained environments.

%%%
\begin{table}[htbp]
\centering
\caption{Performance comparison of various models across CharacterTrajectories, SpokenArabicDigits, and UWaveGestureLibrary datasets. Our proposed methods are marked with (Ours), with \textbf{MVC GP} being our primary architecture. Best results in each category are highlighted in \textbf{bold} (for NFE, the best non-zero value is highlighted).}
\label{tab:model_comparison_v3}
\scriptsize
\setlength{\tabcolsep}{4pt}
\renewcommand{\arraystretch}{1.25}
\begin{tabular}{lcccc}
\toprule
\textbf{Model} & \textbf{Test Acc. (\%)} & \textbf{Total Time (s)} & \textbf{Avg NFE} & \textbf{Hyperparameters} \\
\midrule
\multicolumn{5}{c}{\textbf{CharacterTrajectories}} \\
\midrule
Linear & 90.59 $\pm$ 1.47 & 129.02 $\pm$ 1.63 & 347.00 $\pm$ 13.49 & LR:0.0001 | Tol:0.001 \\
Cubic & 91.89 $\pm$ 0.47 & 100.22 $\pm$ 0.44 & 239.00 $\pm$ 4.69 & LR:0.0001 | Tol:0.001 \\
ODE-RNN & 74.65 $\pm$ 4.07 & 333.62 $\pm$ 23.88 & 1953.03 $\pm$ 317.93 & LR:0.005 | Tol:0.001 \\
GRU-D & 93.17 $\pm$ 0.78 & \textbf{10.56 $\pm$ 0.39} & 0.00 $\pm$ 0.00 & LR:0.005 | Tol:0.001 \\
Log-NCDE & 90.87 $\pm$ 0.89 & 163.57 $\pm$ 1.96 & 839.86 $\pm$ 26.51 & LR:0.005 | Tol:0.001 | Depth:1.0 | Step:10.0 \\
Mamba & \textbf{96.64 $\pm$ 1.24} & 31.16 $\pm$ 0.13 & 0.00 $\pm$ 0.00 & LR:0.009 | Tol:0.001 | L:2 | DSt:16 | Exp:2 | Pscan:True \\
Gaussian (Ours) & 89.20 $\pm$ 2.26 & 49.53 $\pm$ 2.24 & 165.63 $\pm$ 4.23 & BW:5.95 | LR:0.0001 | Tol:0.001 \\
GP (Ours) & 88.15 $\pm$ 0.42 & 46.41 $\pm$ 1.61 & 169.23 $\pm$ 2.06 & LR:0.0001 | Tol:0.001 | LS:71.4 | Noise:0.01 \\
MV Gaussian (Ours) & 89.93 $\pm$ 0.73 & 50.31 $\pm$ 2.78 & 190.49 $\pm$ 10.77 & BW:[3.57,11.9,47.6,166.6] | LR:0.0001 | Tol:0.001  \\
MV GP (Ours) & 92.97 $\pm$ 1.74 & 46.20 $\pm$ 1.65 & 206.60 $\pm$ 5.60 & BW:[3.57,11.9,47.6,166.6] | LR:0.0001 | Tol:0.001 | Noise:0.01  \\
MVC Gaussian (Ours) & 91.71 $\pm$ 1.40 & 51.65 $\pm$ 1.63 & 209.34 $\pm$ 15.26 & BW:[3.57,11.9,47.6,166.6] | LR:0.0001 | Tol:0.001 | KS:3.0  \\
\textbf{MVC GP (Ours, Main)} & 95.07 $\pm$ 0.49 & 22.63 $\pm$ 0.48 & \textbf{95.17 $\pm$ 7.41} & BW:[166.6,166.6,166.6,166.6] | LR:0.0001 | Tol:0.001 | Noise:0.01 | KS:3.0  \\
\midrule
\multicolumn{5}{c}{\textbf{SpokenArabicDigits}} \\
\midrule
Linear & 89.44 $\pm$ 0.93 & 120.51 $\pm$ 1.29 & 1119.00 $\pm$ 42.00 & LR:0.001 | Tol:0.001 \\
Cubic & 88.70 $\pm$ 1.03 & 42.74 $\pm$ 0.84 & 339.00 $\pm$ 7.35 & LR:0.001 | Tol:0.001 \\
ODE-RNN & 68.47 $\pm$ 2.62 & 53.29 $\pm$ 4.81 & 1359.20 $\pm$ 170.47 & LR:0.01 | Tol:0.001 \\
GRU-D & 96.57 $\pm$ 0.13 & \textbf{2.02 $\pm$ 0.09} & 0.00 $\pm$ 0.00 & LR:0.01 | Tol:0.001 \\
Log-NCDE & 86.81 $\pm$ 0.54 & 11.32 $\pm$ 0.16 & 215.80 $\pm$ 9.23 & LR:0.01 | Tol:0.001 | Depth:1.0 | Step:20.0 \\
Mamba & 96.82 $\pm$ 0.94 & 44.82 $\pm$ 16.73 & 0.00 $\pm$ 0.00 & LR:0.005 | Tol:0.001 | L:2 | DSt:16 | Exp:2 | Pscan:True \\
Gaussian (Ours) & 96.32 $\pm$ 0.61 & 14.27 $\pm$ 0.58 & 187.00 $\pm$ 5.10 & BW:3.25 | LR:0.001 | Tol:0.001 \\
GP (Ours) & 96.12 $\pm$ 0.33 & 11.73 $\pm$ 0.46 & 181.40 $\pm$ 5.37 & LR:0.001 | Tol:0.001 | LS:39.0 | Noise:0.01 \\
MV Gaussian (Ours) & 97.11 $\pm$ 0.27 & 19.59 $\pm$ 0.68 & 195.40 $\pm$ 6.54 & BW:[1.95,6.5,26.0,91.0] | LR:0.001 | Tol:0.001  \\
MV GP (Ours) & 97.52 $\pm$ 0.39 & 13.61 $\pm$ 0.48 & 187.00 $\pm$ 7.21 & BW:[6.5,26.0,91.0] | LR:0.001 | Tol:0.001 | Noise:0.01  \\
MVC Gaussian (Ours) & 97.34 $\pm$ 0.46 & 19.12 $\pm$ 0.56 & 191.40 $\pm$ 9.94 & BW:[1.95,6.5,26.0,91.0] | LR:0.001 | Tol:0.001 | KS:3.0  \\
\textbf{MVC GP (Ours, Main)} & \textbf{98.28 $\pm$ 0.29} & 7.32 $\pm$ 0.15 & \textbf{95.40 $\pm$ 0.89} & BW:[91.0,91.0,91.0] | LR:0.001 | Tol:0.001 | Noise:0.01 | KS:3.0  \\
\midrule
\multicolumn{5}{c}{\textbf{UWaveGestureLibrary}} \\
\midrule
Linear & 86.82 $\pm$ 4.22 & 134.34 $\pm$ 2.17 & 2068.20 $\pm$ 151.75 & LR:0.001 | Tol:0.0001 \\
Cubic & 85.68 $\pm$ 3.47 & 81.78 $\pm$ 5.36 & 1042.20 $\pm$ 116.76 & LR:0.001 | Tol:0.0001 \\
ODE-RNN & 37.95 $\pm$ 2.06 & 205.21 $\pm$ 1.27 & 4397.20 $\pm$ 2.68 & LR:0.005 | Tol:0.01 \\
GRU-D & 78.86 $\pm$ 2.62 & \textbf{5.74 $\pm$ 0.11} & 0.00 $\pm$ 0.00 & LR:0.01 | Tol:0.001 \\
Log-NCDE & 70.91 $\pm$ 6.36 & 34.87 $\pm$ 0.53 & 753.80 $\pm$ 25.60 & LR:0.01 | Tol:0.001 | Depth:1.0 | Step:30.0 \\
Mamba & 80.23 $\pm$ 8.26 & 17.47 $\pm$ 0.08 & 0.00 $\pm$ 0.00 & LR:0.005 | Tol:0.001 | L:2 | DSt:16 | Exp:2 | Pscan:True \\
Gaussian (Ours) & 88.41 $\pm$ 2.59 & 14.84 $\pm$ 0.80 & 236.60 $\pm$ 6.84 & BW:15.75 | LR:0.001 | Tol:0.0001 \\
GP (Ours) & 91.14 $\pm$ 1.87 & 8.69 $\pm$ 0.25 & \textbf{170.60 $\pm$ 5.37} & LR:0.001 | Tol:0.0001 | LS:441.0 | Noise:0.05 \\
MV Gaussian (Ours) & 89.77 $\pm$ 3.31 & 17.97 $\pm$ 1.79 & 320.60 $\pm$ 23.08 & BW:[9.45,31.5,126.0,441.0] | LR:0.001 | Tol:0.0001  \\
MV GP (Ours) & 92.95 $\pm$ 1.24 & 17.19 $\pm$ 1.20 & 359.00 $\pm$ 23.62 & BW:[15.75,63.0,189.0] | LR:0.001 | Tol:0.0001 | Noise:0.05  \\
MVC Gaussian (Ours) & 90.00 $\pm$ 2.03 & 19.87 $\pm$ 1.78 & 453.80 $\pm$ 38.28 & BW:[15.75,15.75,15.75,15.75] | LR:0.001 | Tol:0.0001 | KS:3.0  \\
\textbf{MVC GP (Ours, Main)} & \textbf{95.39 $\pm$ 0.28} & 12.29 $\pm$ 0.27 & 218.60 $\pm$ 10.04 & BW:[31.5,63.0,126.0] | LR:0.001 | Tol:0.0001 | Noise:0.05 | KS:3.0  \\
\bottomrule
\end{tabular}
\end{table}

\textbf{Component-wise decomposition.}
Table~\ref{tab:model_comparison_v3} supports a direct ablation of our design choices on the UEA benchmarks. Replacing splines with single-path smoothing (Gaussian or GP) typically reduces NFE by roughly $3\times$--$5\times$ while maintaining competitive accuracy (e.g., on UWaveGestureLibrary, cubic NFE $\approx 1042$ vs.\ GP $\approx 171$). Adding the multi-view attention mechanism (MV over single-path GP) yields an additional accuracy gain on the order of $1\%$--$4\%$ (e.g., MV GP 92.95\% vs.\ GP 91.14\% on UWave). The convolutional front-end (MVC) contributes a further $\approx 2\%$--$3\%$ in accuracy and can halve total training time when combined with coarse bandwidths (e.g., CharacterTrajectories: MV GP 92.97\% / 46.2\,s vs.\ MVC GP 95.07\% / 22.6\,s). GP smoothing generally attains higher accuracy than kernel smoothing at comparable NFE, but kernel fitting is cheaper; GP models can require up to $\approx 2\times$ longer training on some datasets when finer scales are needed.

%PEAK 
\begin{table}[htbp]
\centering
\captionsetup{skip=8pt}
\caption{\textbf{Comparison of Parameter Counts and Peak Memory.} Values are presented as Parameters / Peak VRAM (MB). Models were capacity-matched by parameter count.}
\label{tab:params_vram}
\begin{tabular}{lccc}
\toprule
\textbf{Model} & \textbf{CharacterTrajectories} & \textbf{SpokenArabicDigits} & \textbf{UWaveGestureLibrary} \\ 
\midrule
Linear (NCDE) & 325,704 / 152.37 & 753,686 / 485.73 & 253,096 / 140.07 \\
Cubic (NCDE) & 325,704 / 155.06 & 753,686 / 510.18 & 253,096 / 143.29 \\
MVC Gaussian (Ours) & 326,040 / 160.12 & 986,510 / 1,070.26 & 319,884 / 286.04 \\
\textbf{MVC GP (Ours)} & 326,040 / 375.26 & 753,933 / 1,075.16 & 252,683 / 1,710.11 \\
Mamba & 326,820 / 8,478.39 & 754,738 / 62,323.94 & 249,488 / 25,224.74 \\
\bottomrule
\end{tabular}
\end{table}

% \clearpage 

\subsection{Single-Kernel Smoothing Baselines}
To verify that the performance gains stem from the Multi-View Attention mechanism rather than from the use of Gaussian Processes alone, we evaluated single-trajectory models without attention.

Figure~\ref{fig:single_kernel} presents the performance landscape (Error Rate vs. Training Time) for single-kernel methods compared to standard baselines. While single-kernel models often outperform ODE-RNN in speed, they exhibit high accuracy variance across different bandwidth choices. This instability underscores the necessity of the Multi-View mechanism to robustly aggregate dynamics.

\begin{figure*}[ht]
    \centering
    \includegraphics[width=\textwidth]{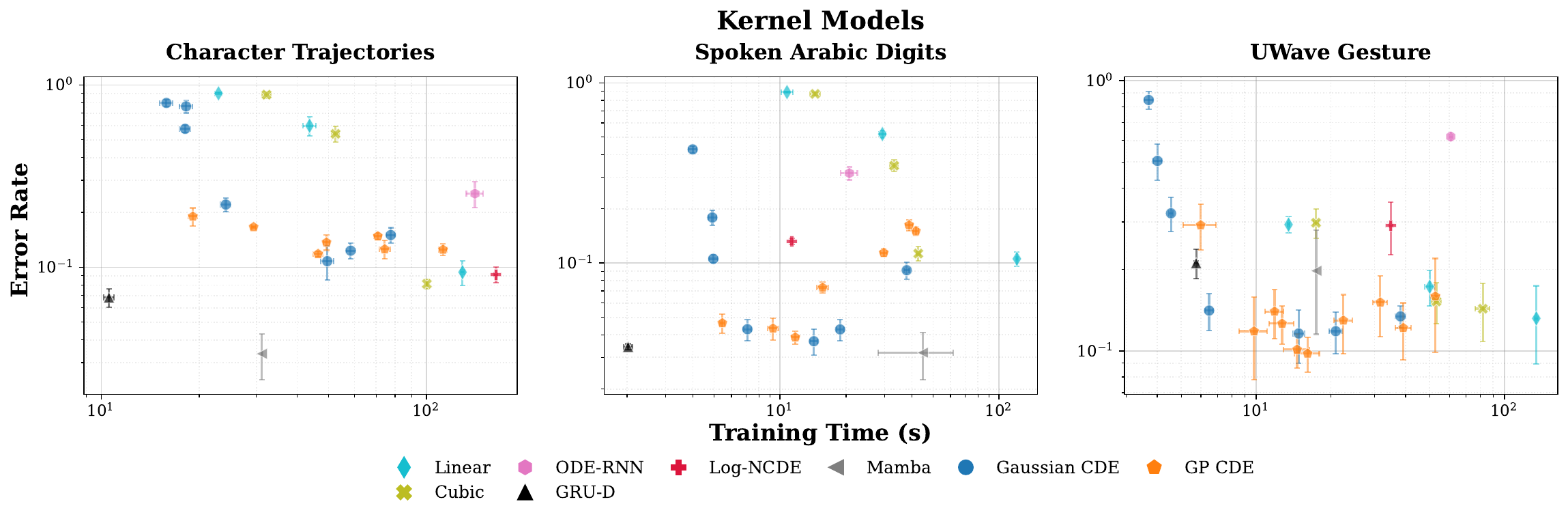}
    \caption{\textbf{Single-Kernel Smoothing Performance.} The scatter plots illustrate the trade-off between error rate and training time for single-kernel Gaussian and GP models.}
    \label{fig:single_kernel}
\end{figure*}

\subsection{Impact of Bandwidth Initialization Strategy}

We further investigate how the diversity of smoothing scales impacts the model's position on the Pareto frontier. We compare two initialization configurations:

\begin{itemize}
    \item \textbf{Heterogeneous Smoothing (Figure~\ref{fig:diff_heads}):} Bandwidths are initialized with geometrically spaced values to capture multi-scale dynamics.
    \item \textbf{Homogeneous Smoothing (Figure~\ref{fig:same_heads}):} All heads share the same initial bandwidth.
\end{itemize}

\begin{figure*}[ht]
    \centering
    \includegraphics[width=\textwidth]{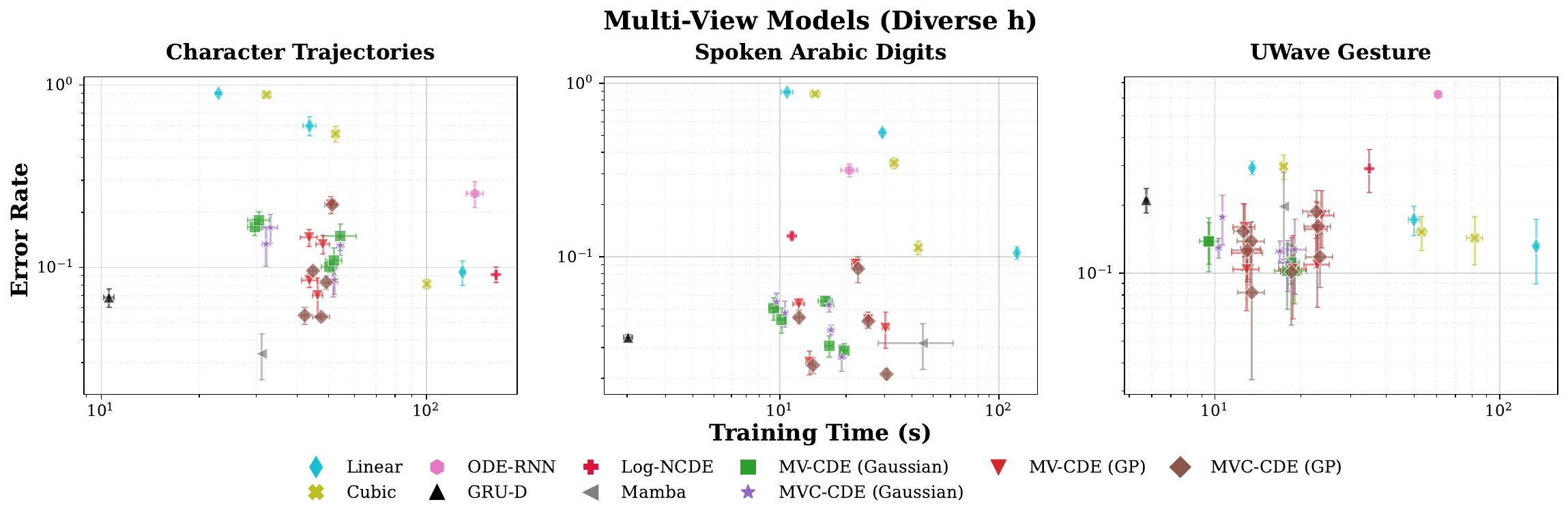}
    \caption{\textbf{Pareto Efficiency: Heterogeneous Bandwidths.} Error Rate vs. Training Time comparison when attention heads are initialized with diverse smoothing scales.}
    \label{fig:diff_heads}
\end{figure*}

\begin{figure*}[ht]
    \centering
    \includegraphics[width=\textwidth]{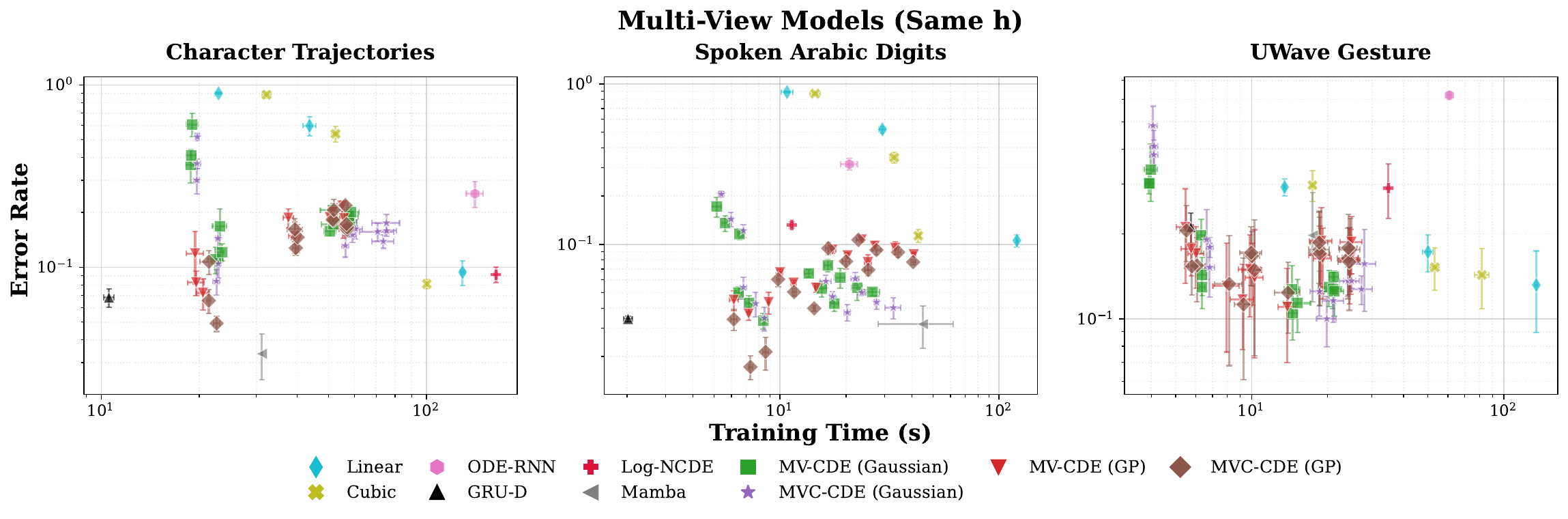}
    \caption{\textbf{Pareto Efficiency: Homogeneous Bandwidths.} Performance landscape when all attention heads are initialized with identical smoothing scales.}
    \label{fig:same_heads}
\end{figure*}

\textbf{Analysis.} As shown in Figure~\ref{fig:diff_heads}, the Heterogeneous strategy pushes the proposed models (MVC-GP and MVC-Gaussian) towards the bottom-left corner of the plot, indicating superior accuracy with minimal training time. This diversity enables the model to effectively disentangle stiff and smooth dynamics. In contrast, while the Homogeneous setting (Figure~\ref{fig:same_heads}) still outperforms standard Linear and Cubic baselines, the lack of scale diversity yields a less favorable trade-off, with models often clustering at higher error rates or requiring longer training times than their heterogeneous counterparts.

\subsection{Detailed Computational Cost Analysis}

To explain the speedups reported in the main text, Figure~\ref{fig:time_analysis} provides a granular breakdown of wall-clock time measured entirely on an NVIDIA H100 with small batch sizes, which better match online and streaming workloads. We profile on the H100 because per-trajectory GP fitting is embarrassingly parallel and small batches avoid saturating the device's parallelization capacity. We distinguish between:
\begin{enumerate}
    \item \textbf{Pure Training Time:} The time spent by the ODE solver during the iterative training loop.
    \item \textbf{Overhead:} The combined time for Trajectory Fitting (pre-computation) and Inference (which includes CDE integration).
\end{enumerate}

\textbf{Breakdown.} Standard NCDEs suffer from high training latency due to the stiffness of the control paths. In contrast, our smoothing-based methods incur a one-time pre-computation cost but drastically reduce the pure training time. This shift of complexity from the iterative loop to pre-processing results in a significantly lower total time to convergence. Figure~\ref{fig:time_analysis} (bottom) shows that even for MVC-GP, preprocessing remains small relative to integration, consistent with the single-trajectory GP and kernel cases.

\begin{figure*}[htbp]
    \centering
    \includegraphics[width=\textwidth]{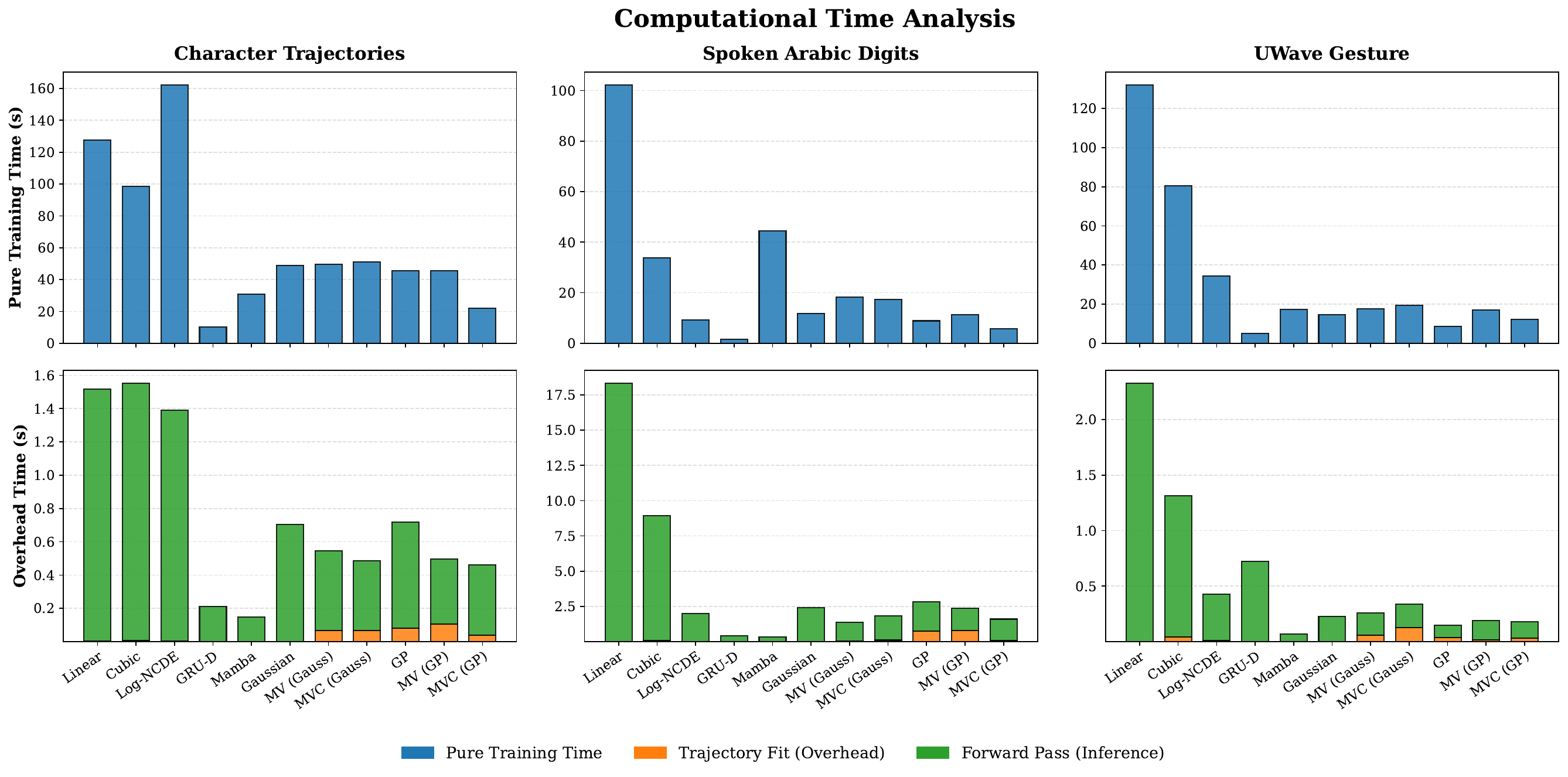}
    \caption{\textbf{Computational Cost Breakdown (NVIDIA H100, small batches).} Top: Pure training time significantly decreases for smoothing-based models compared to standard Linear and Cubic baselines, as the smoother paths reduce solver stiffness. Bottom: The overhead analysis shows that our method incurs a one-time trajectory-fitting cost (orange segments), which is outweighed by the substantial reduction in iterative training time.}
    \label{fig:time_analysis}
\end{figure*}

\subsection{Attention Head Diversity and Regularization}
\label{app:head_diversity}
To investigate whether the multiple attention heads in our Multi-View architectures learn distinct temporal representations, we analyzed the diversity of the attention weights on the UWaveGestureLibrary dataset. We quantify this diversity using the maximum pairwise Cosine Distance between the attention score distributions.

Furthermore, we experimented with explicitly enforcing diversity during training by introducing a regularization term $\mathcal{L}_{\text{reg}}$ (with coefficient $\lambda=0.1$) that penalizes high cosine similarity between heads. As demonstrated in Table~\ref{tab:diversity}, applying this regularization successfully decreases the maximum cosine similarity (i.e., increases diversity). However, it simultaneously degrades classification accuracy (e.g., from 92.05\% to 87.50\% for MVC GP). This reveals a critical trade-off: strictly forcing heads to look at different parts of the trajectory causes some heads to attend to uninformative noise simply to remain distinct. Our unregularized MVC-CDE naturally strikes an optimal balance, maintaining sufficient diversity (driven by heterogeneous bandwidth initialization) while enabling heads to collaborate on task-relevant features.
\begin{table}[htbp]
\centering
\caption{\textbf{Head Diversity and Regularization Impact (UWaveGestureLibrary).} Comparison of Maximum Cosine Similarity between attention heads and regularization effect on accuracy between MV and MVC models with 5 heads (Bandwidths: [0.03, 0.3, 1.4, 2.4, 4.0]), GP Kernel.}
    
\label{tab:diversity}
\begin{tabular}{lcc}
\toprule
\textbf{Architecture} & \textbf{Max Cosine Similarity $\downarrow$} & \textbf{Test Accuracy (\%) $\uparrow$} \\ 
\midrule
MV GP & $0.383 \pm 0.045$ & $86.36 \pm 1.45$ \\
MV GP + Regularization & $0.120 \pm 0.021$ & $80.68 \pm 1.82$ \\
\midrule
\textbf{MVC GP (Ours)} & $\mathbf{0.222 \pm 0.033}$ & $\mathbf{92.05 \pm 1.12}$ \\
MVC GP + Regularization & $0.069 \pm 0.014$ & $87.50 \pm 1.55$ \\
\bottomrule
\end{tabular}
\end{table}

\subsection{Robustness to Irregular Sampling}
\label{app:irregular_sampling}
In real-world scenarios, sensor failures or asynchronous recording often lead to missing data. To evaluate our model's robustness to sparse observational grids, we simulated missingness on the Speech Commands dataset by randomly dropping 30\% and 50\% of the available observations. 

As shown in Table~\ref{tab:speech_commands}, baseline Spline CDEs suffer a noticeable performance drop and exhibit extremely high NFE. This occurs because exact deterministic interpolants create sharp, unnatural artifacts when spanning large gaps between missing observations, forcing the solver to take tiny steps. Conversely, MVC GP demonstrates strong robustness, maintaining over 89\% accuracy even at a 50\% drop rate, while keeping integration costs consistently low. The GP prior effectively imputes the missing gaps with smooth, well-behaved trajectories.

\begin{table}[htbp]
\centering
\caption{\textbf{Robustness to Irregular Sampling (Speech Commands).} Evaluation under different data drop rates.}
\label{tab:speech_commands}
\begin{tabular}{clccc}
\toprule
\textbf{Drop Rate} & \textbf{Model} & \textbf{Accuracy (\%) $\uparrow$} & \textbf{Avg NFE $\downarrow$} & \textbf{Train Time (s) $\downarrow$} \\ 
\midrule
\multirow{3}{*}{30\%} & Baseline (Cubic) & $73.31 \pm 2.10$ & $425 \pm 18$ & 16,131 \\
 & Baseline (Linear) & $74.19 \pm 2.45$ & $838 \pm 25$ & 6,124 \\
 & \textbf{MVC GP (Ours)} & $\mathbf{90.44 \pm 1.15}$ & $\mathbf{236 \pm 15}$ & $\mathbf{1,882}$ \\
\midrule
\multirow{3}{*}{50\%} & Baseline (Cubic) & $73.44 \pm 2.41$ & $378 \pm 14$ & 14,127 \\
 & Baseline (Linear) & $74.37 \pm 1.85$ & $743 \pm 20$ & 7,696 \\
 & \textbf{MVC GP (Ours)} & $\mathbf{89.87 \pm 0.85}$ & $\mathbf{164 \pm 12}$ & $\mathbf{1,598}$ \\
\bottomrule
\end{tabular}
\end{table}

Furthermore, while computationally lightweight discrete baselines like GRU-D perform well on regularly sampled data, their performance degrades drastically when observations are systematically missing. To explicitly demonstrate this practical trade-off, we evaluated GRU-D against MVC GP under random observation drop rates of 30\% and 50\% across the classification benchmarks. 

As shown in Table~\ref{tab:grud_irregular}, MVC GP maintains a significant accuracy advantage. This confirms that the continuous-time representation and robust path smoothing justify the additional computational cost, since discrete RNNs fail to capture the underlying dynamics robustly in highly irregular regimes.

\begin{table}[htbp]
\centering
\caption{\textbf{Comparison of Accuracy across irregular datasets.} Evaluation of GRU-D vs. MVC GP under 30\% and 50\% observation drop rates.}
\label{tab:grud_irregular}
\renewcommand{\arraystretch}{1.2} 
\begin{tabular}{@{}ll cc@{}}
\toprule
\multirow{2}{*}{\textbf{Dataset}} & \multirow{2}{*}{\textbf{Model}} & \textbf{Drop Rate = 0.3} & \textbf{Drop Rate = 0.5} \\
\cmidrule(lr){3-3} \cmidrule(l){4-4}
& & \textbf{Acc (\%)} & \textbf{Acc (\%)} \\
\midrule

\multirow{2}{*}{CharacterTrajectories} 
& GRU-D & $73.18 \pm 0.65$ & $72.45 \pm 0.82$ \\
& MVC GP (Ours) & $\mathbf{91.61 \pm 0.55}$ & $\mathbf{89.86 \pm 0.71}$ \\
\midrule

\multirow{2}{*}{SpokenArabicDigits} 
& GRU-D & $92.95 \pm 0.24$ & $91.07 \pm 0.31$ \\
& MVC GP (Ours) & $\mathbf{97.33 \pm 0.35}$ & $\mathbf{96.59 \pm 0.42}$ \\
\midrule

\multirow{2}{*}{UWaveGestureLibrary} 
& GRU-D & $75.59 \pm 1.85$ & $69.45 \pm 2.74$ \\
& MVC GP (Ours) & $\mathbf{86.36 \pm 0.84}$ & $\mathbf{84.09 \pm 1.12}$ \\

\bottomrule
\end{tabular}
\end{table}

\subsection{Impact of Adaptive ODE Solvers}
\label{app:solvers}
The choice of the numerical ODE solver plays a crucial role in the speed-accuracy trade-off. We evaluated our method alongside baselines using solvers of varying mathematical orders: \texttt{bosh3} (Bogacki-Shampine, 3rd order), \texttt{dopri5} (Dormand-Prince, 5th order), and \texttt{dopri8} (8th order). 

High-order solvers (like \texttt{dopri8}) assume the underlying vector field is highly smooth. If the driving path contains residual local variations, the strict local truncation error bounds of an 8th-order solver are frequently violated, leading to massive step rejections and a spike in NFE. This phenomenon is clearly visible in Table~\ref{tab:solver_time_nfe}, where \texttt{dopri8} drastically increases computational time. Conversely, because our GP smoothing produces well-regularized paths without extreme high-frequency jumps, lower-to-mid order solvers become highly efficient. For MVC GP, \texttt{bosh3} halves the NFE compared to \texttt{dopri5} with only a marginal drop in accuracy (Table~\ref{tab:solver_acc}), offering an excellent configuration for latency-critical applications.

\begin{table}[htbp]
\centering
\captionsetup{skip=8pt}
\caption{\textbf{Impact of Adaptive ODE Solvers on Accuracy (\%).} Evaluation on the UWaveGestureLibrary dataset across different solver orders.}
\label{tab:solver_acc}
\begin{tabular}{lccc}
\toprule
\textbf{Model} & \textbf{\texttt{dopri5} (Standard)} & \textbf{\texttt{bosh3} (Low-order)} & \textbf{\texttt{dopri8} (High-order)} \\ 
\midrule
Linear (NCDE) & $86.59 \pm 4.43$ & $85.12 \pm 2.80$ & $82.95 \pm 3.10$ \\
Cubic (NCDE) & $89.09 \pm 3.65$ & $88.64 \pm 2.15$ & $81.82 \pm 4.25$ \\
\textbf{MVC GP (Ours)} & $\mathbf{95.39 \pm 0.28}$ & $\mathbf{93.18 \pm 0.75}$ & $\mathbf{89.77 \pm 1.20}$ \\
\bottomrule
\end{tabular}
\end{table}

\begin{table}[htbp]
\centering
\captionsetup{skip=8pt}
\caption{\textbf{Impact of Adaptive ODE Solvers on Computational Efficiency.} Evaluation on the UWaveGestureLibrary dataset. Values are presented as Total Training Time (s) / Average NFE.}
\label{tab:solver_time_nfe}
\begin{tabular}{lccc}
\toprule
\textbf{Model} & \textbf{\texttt{dopri5} (Standard)} & \textbf{\texttt{bosh3} (Low-order)} & \textbf{\texttt{dopri8} (High-order)} \\ 
\midrule
Linear (NCDE) & $128 \pm 5$ / $2077 \pm 71$ & $320 \pm 14$ / $4150 \pm 115$ & $1103 \pm 45$ / $19672 \pm 415$ \\
Cubic (NCDE) & $75 \pm 3$ / $1075 \pm 78$ & $192 \pm 8$ / $2322 \pm 85$ & $278 \pm 11$ / $4449 \pm 125$ \\
\textbf{MVC GP (Ours)} & $\mathbf{12.29 \pm 0.27}$ / $218 \pm 10$ & $19.93 \pm 0.85$ / $\mathbf{108 \pm 6}$ & $25.51 \pm 1.15$ / $393 \pm 18.50$ \\
\bottomrule
\end{tabular}
\end{table}

\subsection{High-Dimensional Scalability}
\label{app:high_dim}
To test the limits of our architecture, we evaluated it on the highly multivariate PEMS-SF dataset, which contains 963 continuous feature dimensions. Processing continuous-time dynamics in such a high-dimensional space is notoriously slow for standard Neural CDEs.

Table~\ref{tab:pems_sf} illustrates the classic trade-off between hardware-accelerated discrete models and continuous-time expressive models. Mamba, utilizing a highly optimized discrete parallel scan, trains exceptionally fast (41 seconds). However, it struggles to fully capture the complex, asynchronous spatial-temporal dependencies of the 963 sensors, achieving only 71.59\% accuracy. Our MVC GP, while computationally heavier than Mamba, is nearly an order of magnitude faster than standard Linear CDEs (1,679s vs 13,049s) and achieves a dominant accuracy of 84.09\%. This confirms that path smoothing makes continuous-time modeling highly tractable even for nearly 1000-dimensional systems. During inference on PEMS-SF, preprocessing adds minimal overhead relative to the full forward pass: trajectory fitting accounts for only 1.33\% of total test time, while the combined encoder and ODE solve account for 98.67\%. Smoothing the driving signal therefore accelerates integration without creating an inference-time bottleneck at this scale.
\begin{table}[htbp]
\centering
\captionsetup{skip=8pt}
\caption{\textbf{High-Dimensional Scalability (PEMS-SF dataset, 963 dimensions).}}
\label{tab:pems_sf}
\begin{tabular}{lccc}
\toprule
\textbf{Model} & \textbf{Acc (\%) $\uparrow$} & \textbf{NFE $\downarrow$} & \textbf{Train Time (s) $\downarrow$} \\ 
\midrule
Baseline (Linear) & $63.64 \pm 0.9$ & $6383 \pm 159$ & 13,049 \\
Baseline (Cubic) & $67.05 \pm 1.4$ & $1163 \pm 97$ & 5,934 \\
Mamba & $71.59 \pm 1.1$ & - & $\mathbf{41}$ \\
\textbf{MVC GP (Ours)} & $\mathbf{84.09 \pm 0.8}$ & $\mathbf{225 \pm 12}$ & 1,679 \\
\bottomrule
\end{tabular}
\end{table}

\subsection{Comparison with Smoothing Splines}
\label{app:smoothing_splines}
To ensure that our performance gains are not solely due to basic trajectory smoothing, we implemented a baseline using standard statistical Smoothing Splines. Specifically, we preprocessed the raw sequences using SciPy's \texttt{UnivariateSpline} applied independently to each spatial channel. To account for varying sequence lengths and missing data, the smoothing penalty $s$ was computed dynamically for each sequence as $s = \alpha \times N_{\text{valid}}$, where $N_{\text{valid}}$ is the number of valid (non-NaN) observations in the specific channel, and $\alpha = 0.5$ is a constant scaling hyperparameter. The resulting smoothed paths were then converted into standard natural cubic CDE coefficients.

As shown in Table~\ref{tab:acc_smoothing_spline}, the Smoothing Spline baseline performs reasonably well but still falls short of our Multi-View architectures. The fundamental limitation of a traditional smoothing spline is its static, global nature: it applies a uniform degree of smoothing across the entire time series, which inevitably destroys localized high-frequency signals necessary for classification. Our MVC GP overcomes this by combining multiple heterogeneous smoothing scales with an attention mechanism, dynamically preserving task-relevant details while effectively suppressing solver-stiffening noise.

\begin{table}[htbp]
\centering
\captionsetup{skip=8pt}
\caption{\textbf{Accuracy Comparison (\%) including Smoothing Spline Baseline.}}
\label{tab:acc_smoothing_spline}
\begin{tabular}{lccc}
\toprule
\textbf{Model} & \textbf{CharacterTrajectories} & \textbf{SpokenArabicDigits} & \textbf{UWaveGestureLibrary} \\ 
\midrule
Linear (NCDE) & $90.59 \pm 1.47$ & $89.44 \pm 0.93$ & $86.82 \pm 4.22$ \\
Cubic (NCDE) & $91.89 \pm 0.47$ & $88.70 \pm 1.03$ & $85.68 \pm 3.47$ \\
Smoothing Spline & $90.80 \pm 0.89$ & $96.81 \pm 0.34$ & $85.45 \pm 4.85$ \\
MVC Gaussian (Ours) & $91.71 \pm 1.40$ & $97.34 \pm 0.46$ & $90.00 \pm 2.03$ \\
\textbf{MVC GP (Ours)} & $\mathbf{95.07 \pm 0.49}$ & $\mathbf{98.28 \pm 0.29}$ & $\mathbf{95.39 \pm 0.28}$ \\
\bottomrule
\end{tabular}
\end{table}

\end{document}